\documentclass{article} 
\usepackage{iclr2026_conference,times}


\usepackage{amsmath,amsfonts,bm}









\def\eqref#1{equation~\ref{#1}}









\def\1{\bm{1}}










\DeclareMathAlphabet{\mathsfit}{\encodingdefault}{\sfdefault}{m}{sl}
\SetMathAlphabet{\mathsfit}{bold}{\encodingdefault}{\sfdefault}{bx}{n}













\DeclareMathOperator*{\argmin}{arg\,min}

\usepackage{hyperref}
\usepackage{url}
\usepackage{graphicx}
\usepackage{amsthm}
\usepackage{natbib}
\usepackage{comment}

\usepackage{colortbl}
\definecolor{bestOurs}{RGB}{198,226,255}   
\definecolor{bestOther}{RGB}{255,204,204}  

\usepackage{amssymb}
\usepackage[table]{xcolor}

\usepackage{multirow}
\usepackage{algorithm}
\usepackage{algorithmic}
\usepackage{booktabs}
\usepackage{amsmath} 
\usepackage{soul} 
\usepackage{xcolor} 
\sethlcolor{yellow} 
%
\usepackage{newfloat}
\usepackage{listings}

\newtheorem{theorem}{Theorem}

\newtheorem{corollary}{Corollary}

\usepackage{graphicx}
\usepackage{subcaption}

\title{Geometric Uncertainty for Detecting and Correcting Hallucinations in LLMs}


\author{Edward Phillips\textsuperscript{
1}\thanks{Co-first authors.}, Sean Wu\textsuperscript{1}\footnotemark[1], Soheila Molaei\textsuperscript{1}, Danielle Belgrave\textsuperscript{2}, \\
\textbf{Anshul Thakur\textsuperscript{1}, 
David Clifton\textsuperscript{1, 3}} \\
\textsuperscript{1}Department of Engineering Science, University of Oxford \quad \textsuperscript{2}GlaxoSmithKline \\
\textsuperscript{3}Oxford Suzhou Centre for Advanced Research\\
\texttt{\{edward.phillips, sean.wu, david.clifton\}@eng.ox.ac.uk}
}

%

\iclrfinalcopy 
\begin{document}

\maketitle

\begin{abstract}
Large language models demonstrate impressive results across diverse tasks but are still known to hallucinate, generating linguistically plausible but incorrect answers to questions. Uncertainty quantification has been proposed as a strategy for hallucination detection, requiring estimates for both global uncertainty (attributed to a batch of responses) and local uncertainty (attributed to individual responses). While recent black-box approaches have shown some success, they often rely on disjoint heuristics or graph-theoretic approximations that lack a unified geometric interpretation. We introduce a geometric framework to address this, based on archetypal analysis of batches of responses sampled with only black-box model access. At the global level, we propose Geometric Volume, which measures the convex hull volume of archetypes derived from response embeddings. At the local level, we propose Geometric Suspicion, which leverages the spatial relationship between responses and these archetypes to rank reliability, enabling hallucination reduction through preferential response selection. Unlike prior methods that rely on discrete pairwise comparisons, our approach provides continuous semantic boundary points which have utility for attributing reliability to individual responses. Experiments show that our framework performs comparably to or better than prior methods on short form question-answering datasets, and achieves superior results on medical datasets where hallucinations carry particularly critical risks. We also provide theoretical justification by proving a link between convex hull volume and entropy.
\end{abstract}

\section{Introduction}

Large language models (LLMs) have achieved remarkable performance across diverse natural language processing tasks \citep{guo2025deepseek, anthropic_claude4_2025, deepmind_gemini2.5_report_2025, openai_o3_o4mini_system_card_2025} and are increasingly applied in areas such as medical diagnosis, law, and financial advice \citep{yang2025application, chen2024survey, kong2024large}. Hallucinations, however, where models generate plausible but false or fabricated content, pose significant risks for adoption in high-stakes applications \citep{farquhar2024detecting}. Recent work, for example, finds GPT-4 hallucinating in 28.6\% of reference generation tasks \citep{chelli2024hallucination}.

Uncertainty quantification (UQ) methods have been proposed to detect when models are producing unreliable outputs \citep{liu2025uncertainty, xiong2024efficient}. Effective UQ can serve as a critical layer of security and transparency, enabling systems to flag problematic responses and helping users exercise caution when reliability is compromised \citep{farquhar2024detecting}. This capability is particularly crucial for applications such as healthcare and legal services where incorrect information can cause significant harm \citep{huang2025survey, asgari2025framework, latif2025hallucinations}. 

UQ methods can generally be divided into two groups: sampling-based and hidden-state-based. Sampling-based methods generate multiple responses for a prompt and estimate uncertainty over the resulting batch, often requiring only black-box access. Hidden-state-based methods use intermediate activations to estimate uncertainty at the single-response level, which requires white-box access. Previous work also distinguishes between external and internal uncertainty, the former arising from ambiguous queries and the latter from insufficient model knowledge \citep{li2025semantic}. 

We additionally distinguish between uncertainty estimates at the batch level, which we term \textit{global}, and those at the individual response level, which we term \textit{local}. Prior work has referred to the latter as `confidence' \citep{lin2024generatingconfidenceuncertaintyquantification}. A comprehensive UQ method should provide estimates of both types. In high-stakes settings, local UQ enables the selection of the most reliable response from several alternatives, thereby reducing the risk of harmful hallucinations. We argue that archetypal analysis uniquely enables this unification. The extremal archetypes define the global semantic boundaries and also serve as anchors for local response-level diagnostics, something no prior convex-hull or entropy-based method provides.

We introduce a geometric framework to quantify global and local uncertainty using only black-box access. Our approach is sampling-based: for a given prompt we generate a batch of responses at non-zero temperature and embed them with a sentence encoder. We then apply Archetypal Analysis (AA) \citep{cutler1994archetypal} to identify a set of archetypes that span the embedding space. Our first contribution is \emph{Geometric Volume}, a global uncertainty metric defined as the convex hull volume of the archetypes, which reflects the semantic spread of the batch. 

Our second contribution is \emph{Geometric Suspicion}, a local uncertainty measure that accounts for the distribution of responses within the spanned volume and how each response is reconstructed from the archetypes. By deriving local suspicion directly from the boundary conditions of the global volume, our method offers a geometrically grounded alternative to heuristic graph measures for differentiating high and low uncertainty responses.

We validate our framework on \texttt{CLAMBER}, \texttt{TriviaQA}, \texttt{ScienceQA}, \texttt{MedicalQA}, and \texttt{K-QA}. We also provide theoretical analysis linking convex hull volume to entropy. Our contributions are the following:
\begin{itemize}
    \item We propose \emph{Geometric Volume}, a global uncertainty metric that outperforms prior methods on medical QA and is competitive on standard benchmarks.
    \item We introduce \emph{Geometric Suspicion}, a black-box, sampling-based local uncertainty method integrated within the same geometric framework, which reduces hallucinations by guiding Best-of-N response selection.
    \item We provide theoretical support by proving a link between the magnitude of the convex hull volume and the entropy of probability distributions defined within its boundaries.
\end{itemize}
\begin{figure}[t]
\centering
\includegraphics[width=\textwidth]{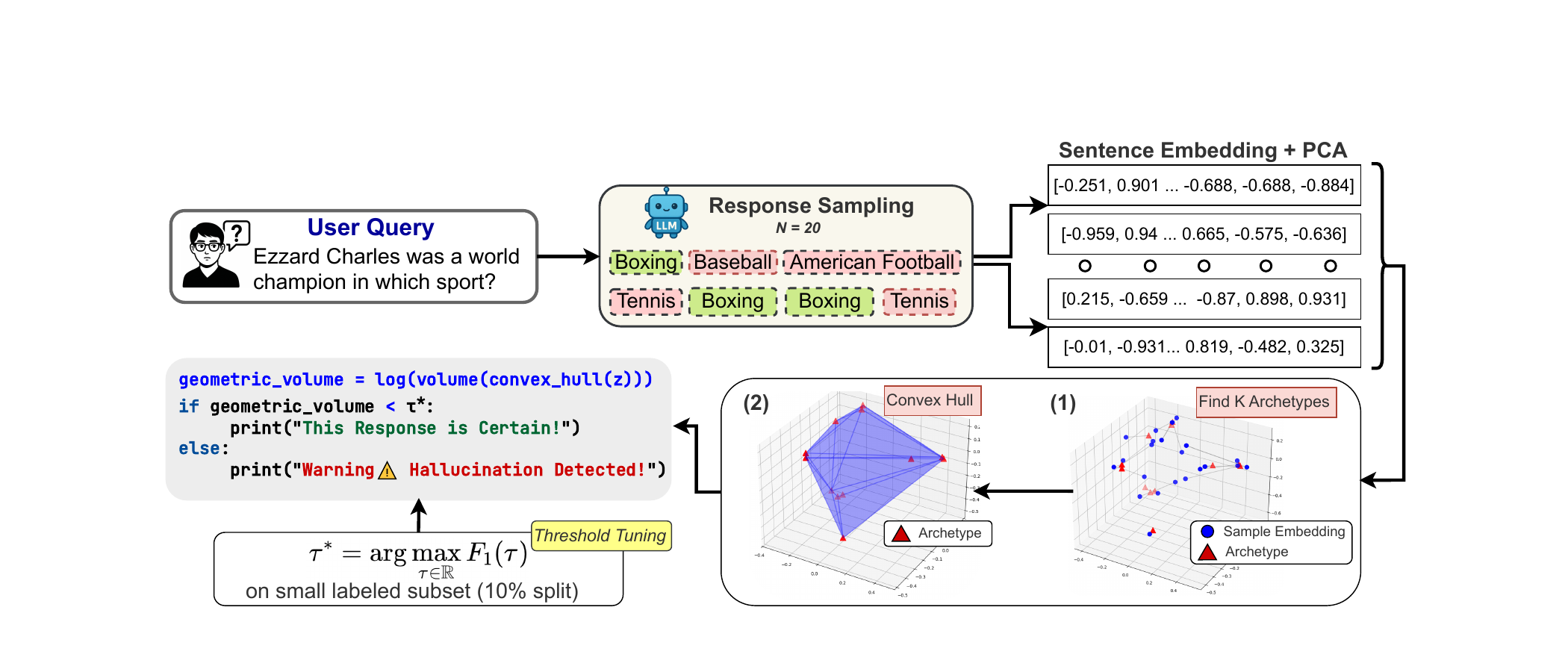}
\caption{A schematic of geometric volume: (1) sample $n$ responses from the LLM, (2) embed and apply dimensionality reduction, (3) perform archetypal analysis and compute the convex hull, and (4) apply a threshold to detect hallucination.}
\label{fig:Figure1}
\end{figure}

\section{Related Works}
\paragraph{Semantic Volume:} Our work is closely related to Semantic Volume \citep{li2025semantic}, which also analyzes the semantic dispersion of a batch of natural language outputs. Specifically, Semantic Volume computes the determinant of the Gram matrix formed from a batch of embeddings, where a low value ($\log \det(V^\top V)$) corresponds to low uncertainty and vice versa. 

This value measures the volume of the parallelepiped formed
by the embedding vectors. We suggest that our archetype-based method leads to a more accurate representation of the embedding space spanned by the embedded responses, whilst offering useful intermediate representations (i.e. the archetypes) for calculating local uncertainty.

\paragraph{Convex Hull Approaches:} Several recent works use convex hull area over response embeddings as a proxy for uncertainty, projecting into two dimensions and summing hulls around clusters \citep{catak2024uncertainty,catak2024improving}. These works however vastly simplify the semantic space by projecting into only two dimensions with principal component analysis. They also first cluster responses before computing and summing the convex hull area around each cluster, which ignores how far apart in embedding space separate clusters may be. In doing so crucial information is lost regarding the variation in response meaning a model is likely to give to a certain prompt. 

\paragraph{Semantic Entropy and Self-Consistency:} Semantic entropy detects hallucinations by clustering semantically equivalent responses using bidirectional entailment, then computing entropy over those clusters \citep{farquhar2024detecting}. Self-consistency methods have built on their work and are emerging as a dominant paradigm \citep{taubenfeld2025confidence, wan2025reasoning, savage2024large}, where multiple responses are sampled and their agreement is used as an uncertainty signal. Again, neither of these works offer additional methods for local UQ.

\paragraph{Semantic Space Confidence:} Several approaches have been proposed to assess the local uncertainty, or confidence, of individual answers by analysis of the semantic answer embeddings. Semantic Density \citep{qiu2024semanticdensityuncertaintyquantification} approximates the probability distribution over answers in semantic space and assigns high confidence to answers lying in high probability regions, but uses model likelihoods to do so and as such is not a black-box method. \cite{lin2024generatingconfidenceuncertaintyquantification} propose several black-box methods to assess global and local uncertainty, including \textit{Degree} and \textit{Eccentricity}. These methods use graphs computed from pairwise comparisons between answer representations, and as such lose the full semantic context of the response set, which is captured by our method.

\paragraph{White-box Uncertainty:} Finally, numerous methods have been proposed to estimate uncertainty with white-box model access, using for instance token probabilities, logits or hidden layer activations without requiring additional sampling. These methods include minimum token probability, average token probability, and more advanced techniques that account for the semantic importance of individual tokens \citep{xia2025survey,zhang2025token, liu2024uncertainty, malinin2020uncertainty, quevedo2024detecting}. 

Unlike prior methods that collapse geometry into a single global score, our use of archetypal analysis yields interpretable anchor points that both define batch-level uncertainty and enable principled response-level attribution in a black-box setting. This enables fine-grained hallucination detection and improves interpretability without requiring access to internal model states.

\section{Methodology}
\label{sec:methodology}

Our geometric framework quantifies LLM uncertainty through a two-tiered geometric analysis of response embeddings. First, it computes a global uncertainty score for a set of responses by modeling the geometric boundaries of their embeddings using archetypal analysis. Second, it introduces a local uncertainty score that ranks individual responses within a sampled batch, enabling a Best-of-N strategy that replaces hallucinated zero-temperature outputs with more reliable alternatives. This is achieved without access to internal model states, making it applicable to any black-box LLM.

\subsection{Preliminaries and Notation}
Let $q$ be an input query to an LLM. We first generate a default response, $r_{\text{default}}$, using greedy decoding. We then generate a set of $n$ responses $\{r_1, r_2, \dots, r_n\}$ using a sampling strategy (e.g., non-zero temperature decoding). A pre-trained sentence embedding model (gte-Qwen2-1.5B-instruct) $\mathcal{E}: \mathcal{S} \to \mathbb{R}^d$ maps each response $r_i$ from the space of sentences $\mathcal{S}$ to a $d$-dimensional embedding vector $\mathbf{x}_i = \mathcal{E}(r_i)$. To address computational challenges associated with high-dimensional embeddings, we apply L2-normalization to each $\mathbf{x}_i$ and then use Principal Component Analysis (PCA) to project the embeddings into a lower-dimensional subspace of dimension $d'$. The full set of responses is represented by the data matrix $\mathbf{X} \in \mathbb{R}^{n \times d'}$, where the $i$-th row is $\mathbf{x}_i^\top$.

\subsection{Global Uncertainty via Geometric Volume}
\label{sec:global_uncertainty}
At a high level, we use the set of responses $\mathbf{X}$ to evaluate the uncertainty associated with $r_{\text{default}}$. Our global uncertainty metric is based on archetypal analysis (AA) \citep{cutler1994archetypal}, an unsupervised method for finding representative \emph{archetypes} that lie on the convex hull of the data. Unlike clustering methods that identify centroids within the data cloud, archetypes represent extremal points, thereby capturing the semantic boundaries of possible model responses.

\paragraph{Archetypal Analysis Formulation}
Given a data matrix $\mathbf{X} \in \mathbb{R}^{n \times d'}$, the goal of Archetypal Analysis (AA) is to learn a set of $K$ archetypes that lie on the convex hull of the data. These archetypes form a dictionary $\mathbf{D} \in \mathbb{R}^{K \times d'}$, with each row representing an archetype. Each data point is required to be a convex combination of the archetypes, and at the same time, each archetype must itself be representable as a convex combination of the data points. 

This bi-convex constraint leads to the following optimization problem \citep{icml_aa}:
\begin{equation}
\centering
\begin{aligned}
&\argmin_{\substack{\mathbf{B},\mathbf{A}\\ \mathbf{b}_j \in \Delta_{n},\mathbf{a}_i \in \Delta_{K}}}\Vert \mathbf{X}\mathbf{-XBA} \Vert_{F}^2,\\
 \Delta_{n} \triangleq [\mathbf{b}_j\succeq 0, &\Vert\mathbf{b}_j\Vert_1=1],  \Delta_{K} \triangleq [\mathbf{a}_i\succeq 0, \Vert\mathbf{a}_i\Vert_1=1].
\label{eq:aa}
\end{aligned}
\end{equation} 

Here, $\mathbf{a}_i$ and $\mathbf{b}_j$ are columns of $\mathbf{A} \in \mathbb{R}^{K \times n}$ and $\mathbf{B} \in \mathbb{R}^{n \times K}$, respectively. Thus, the archetype dictionary is given by $\mathbf{D} = \mathbf{B}^\top \mathbf{X}$. 

In practice, Eq.~\ref{eq:aa} is solved iteratively via block-coordinate descent \citep{CVPR_AA}, alternating between updates of $\mathbf{A}$ and $\mathbf{B}$ until convergence. This procedure ensures that the resulting archetypes capture extremal directions of the data, approximating the vertices of the convex hull.

\paragraph{Geometric Volume} The learned archetypes $\mathbf{Z} = \{\mathbf{z}_1, \dots, \mathbf{z}_K\}$ lie on the boundary of the space spanned by the semantic embeddings of the response set. A high degree of uncertainty, or likely hallucination, manifests as a diverse set of responses, which geometrically corresponds to archetypes that are far apart in the embedding space. We capture this dispersion by computing the volume of the convex hull of the archetypes, $V = \mathrm{volume}(\mathrm{conv}(\mathbf{Z}))$. Our global uncertainty metric, termed Geometric Volume, is the logarithm of this volume:
\begin{equation}
\label{eq:geometric_volume}
H_G(\mathbf{X}) = \log \big( V + \epsilon \big)
\end{equation}
where $\epsilon$ is a small constant (e.g., $10^{-12}$) for numerical stability. A larger volume indicates greater semantic dispersion among the sampled responses, which yields a higher Geometric Volume score and signals higher uncertainty.

\subsection{Simplicial Volume as a Proxy for Uncertainty}
To capture the geometric structure underlying LLM responses, we represent each response as a convex combination of learned archetypes, which lie at the vertices of a simplex in the embedding space. The response set lies within the simplex spanned by these archetypes, whose volume reflects the semantic dispersion involved in its generation. Intuitively, a larger simplex volume implies more ``room'' for diverse responses, geometrically constraining how concentrated the underlying distribution can be. This aligns with differential entropy, which quantifies the spread or unpredictability of a continuous distribution.

We now formalize this relationship with Theorem \ref{thm:entr}, which states that the volume of the archetypal simplex provides an upper bound on the differential entropy of any distribution supported within it, thereby linking geometric dispersion to semantic uncertainty. We provide the proof in the Appendix.

\begin{theorem}
\label{thm:entr}
Let \( \mathcal{A} = \{ \mathbf{a}_1, \dots, \mathbf{a}_k \} \subset \mathbb{R}^d \) be a set of \( k \) affinely independent archetypes. Let \( \Delta = \operatorname{conv}(\mathcal{A}) \) denote the \((k{-}1)\)-dimensional simplex they span, with intrinsic volume \( V > 0 \) measured using the \((k{-}1)\)-dimensional Hausdorff measure \( \mathcal{H}^{k-1} \) on the affine span of \( \Delta \). Let \( \mathbf{x} \) be a continuous random vector supported on \( \Delta \), with density \( p(\mathbf{x}) \) that is absolutely continuous with respect to \( \mathcal{H}^{k-1} \). Then the differential entropy of \( \mathbf{x} \) satisfies:
\begin{equation}
H(\mathbf{x}) = -\int_{\Delta} p(\mathbf{x}) \log p(\mathbf{x}) \, d\mathcal{H}^{k-1}(\mathbf{x}) \le \log V,
\end{equation}
and the upper bound is achieved if and only if \( \mathbf{x} \) is uniformly distributed over \( \Delta \).
\end{theorem}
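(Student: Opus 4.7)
The plan is to derive the entropy bound as the classical maximum-entropy characterization of the uniform distribution on a bounded set, which follows immediately from the non-negativity of the Kullback--Leibler divergence (Gibbs' inequality). Since $\Delta$ has finite intrinsic volume $V > 0$ with respect to $\mathcal{H}^{k-1}$ restricted to its affine span, the uniform density $u(\mathbf{x}) = 1/V$ on $\Delta$ is well-defined and integrates to one against this measure, so it is a valid reference distribution against which to compare $p$.

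First I would introduce the uniform reference density $u(\mathbf{x}) = V^{-1} \mathbf{1}_{\Delta}(\mathbf{x})$. Because $p$ is absolutely continuous with respect to $\mathcal{H}^{k-1}$ and supported on $\Delta$, the divergence $D_{\mathrm{KL}}(p \,\|\, u)$ is well-defined, and direct expansion yields
\begin{equation*}
D_{\mathrm{KL}}(p \,\|\, u) = \int_{\Delta} p(\mathbf{x}) \log \frac{p(\mathbf{x})}{u(\mathbf{x})} \, d\mathcal{H}^{k-1}(\mathbf{x}) = -H(\mathbf{x}) + \log V,
\end{equation*}
using that $\int_{\Delta} p \, d\mathcal{H}^{k-1} = 1$ to pull the constant $\log V$ out of the integral.

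Next I would apply Jensen's inequality to the strictly convex function $-\log$, which gives $D_{\mathrm{KL}}(p \,\|\, u) \geq 0$ and hence $H(\mathbf{x}) \leq \log V$ after rearranging the identity above. For the equality case, strict convexity of $-\log$ combined with the fact that both $p$ and $u$ are normalized densities implies that $D_{\mathrm{KL}}(p \,\|\, u) = 0$ if and only if $p(\mathbf{x}) = u(\mathbf{x})$ for $\mathcal{H}^{k-1}$-almost every $\mathbf{x} \in \Delta$, i.e., $\mathbf{x}$ is uniformly distributed on $\Delta$.

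The main subtlety, rather than a genuine obstacle, is measure-theoretic bookkeeping. Because $\Delta$ is a $(k{-}1)$-dimensional object embedded in $\mathbb{R}^d$, all integrals must be interpreted with respect to the intrinsic Hausdorff measure on its affine span, and one must verify that $u$ is genuinely the density of a probability measure in this intrinsic sense. Affine independence of the archetypes in $\mathcal{A}$ guarantees that the affine span of $\Delta$ is isometric to $\mathbb{R}^{k-1}$; pulling back $p$ and $u$ along this isometry reduces the inequality to the standard finite-dimensional maximum-entropy statement on a compact subset of Euclidean space. The proof is therefore essentially routine once the correct measure-theoretic framework is in place.
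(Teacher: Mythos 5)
Your proposal is correct and follows essentially the same route as the paper's proof: both decompose $\log V - H(\mathbf{x})$ as the KL divergence from $p$ to the uniform density $u = 1/V$ on $\Delta$ and conclude via non-negativity of the divergence (Gibbs'/Jensen's inequality), with equality iff $p = u$ almost everywhere. Your additional remarks on the Hausdorff-measure bookkeeping and the isometry of the affine span to $\mathbb{R}^{k-1}$ are a welcome but minor refinement of the same argument.
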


\begin{proof}
See Appendix ~\ref{appendix:th1}.
\end{proof}

\subsection{Local Uncertainty via Geometric Suspicion}
\label{sec:local_uncertainty}

\begin{figure}[h!]
\centering
\includegraphics[width=0.95\textwidth]{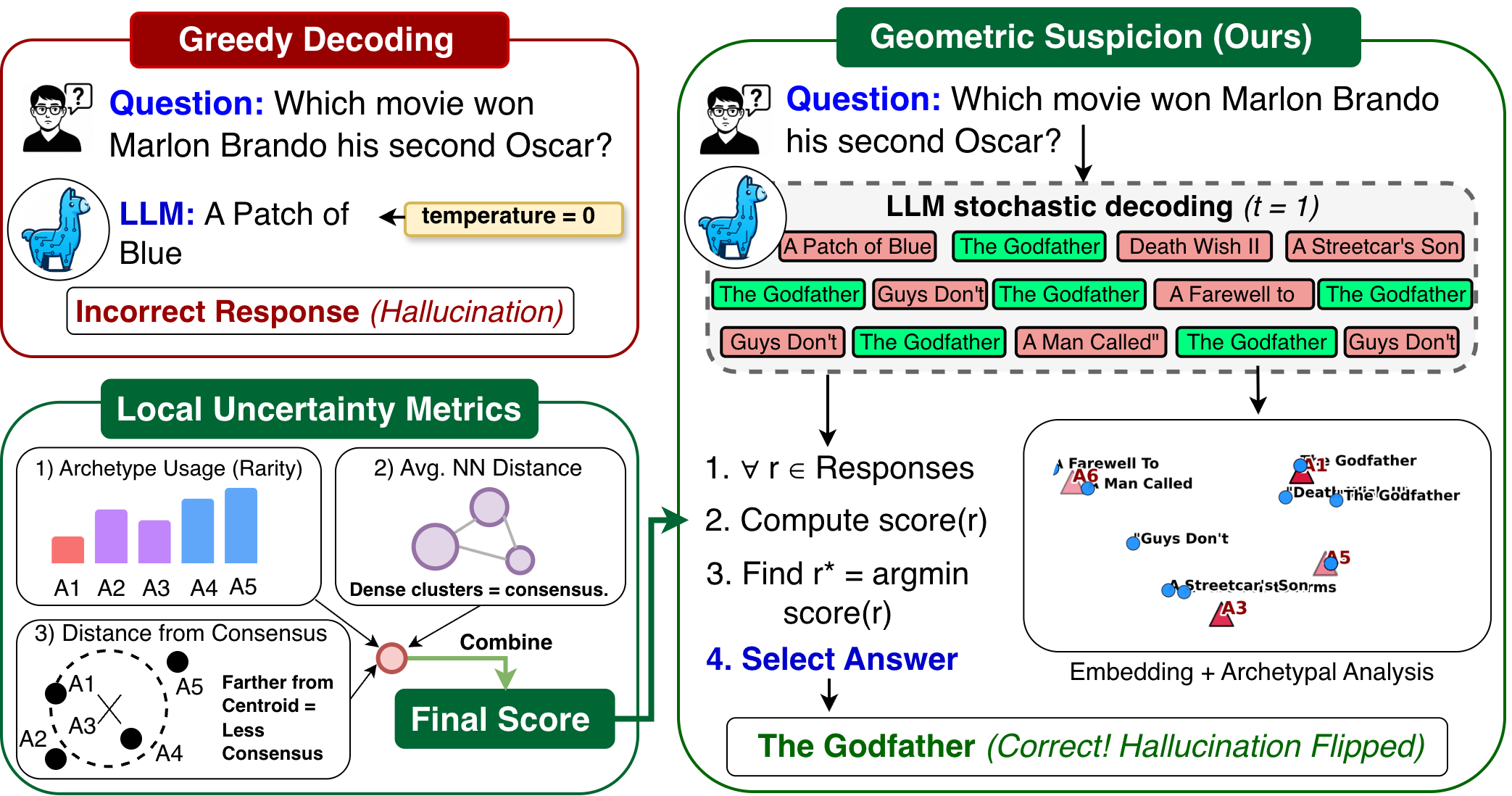}
\caption{We demonstrate local uncertainty metrics using Archetype Rarity, Average Distance to Nearest Neighbors, and Distance from Consensus, which can transform hallucinated responses at temperature~0 into correct responses.}
\label{fig:local-uncertainty}
\end{figure}

The global score $H_G(\mathbf{X})$ allows us to classify hallucinations successfully but only facilitates deferral as a method of reducing hallucinations, as it does not attribute uncertainty to individual responses within a set. Sensible local scores would enable hallucination reduction by preferentially selecting more certain responses from the response set. While recent graph-based approaches have demonstrated the utility of local metrics \citep{lin2024generatingconfidenceuncertaintyquantification}, we propose \textit{Geometric Suspicion}, a local measure explicitly derived from the convex hull geometry used for our global metric. This is built from three per–response terms derived from the archetypal coefficients $\mathbf{A}$ and the answer embeddings $\mathbf{X}$.

\paragraph{Local Density} For each response embedding $\mathbf{x}_i\in\mathbf{X}$, let $\mathcal{N}_k(\mathbf{x}_i)$ be its $k$-nearest neighbours. We score
\begin{equation}
\label{eq:local_density}
L(r_i) = \frac{1}{k}\sum_{\mathbf{x}_j \in \mathcal{N}_k(\mathbf{x}_i)} \big\|\mathbf{x}_i-\mathbf{x}_j\big\|_2,
\end{equation}
so that isolated (locally sparse) responses receive larger values.

\paragraph{Distance from Consensus} To place each response in a global geometric context, we measure the distance to the batch consensus
\begin{equation}
\label{eq:dist_consensus}
D(r_i) = \big\|\mathbf{x}_i-\mathbf{x}_c\big\|_2,
\qquad
\mathbf{x}_c = \frac{1}{n}\sum_{j=1}^n \mathbf{x}_j.
\end{equation}
Large values indicate semantic deviation from the central tendency of the answer set.

\paragraph{Usage Rarity} The $i$-th row of $\mathbf{A}$, denoted $\boldsymbol{\alpha}_i = [A_{i1}, \dots, A_{iK}]$ contains coefficients describing how the response embedding $\mathbf{x}_i$ is reconstructed from the set of learned archetypes $\mathbf{Z}$.
The score for a response $r_i$ is high if its reconstruction coefficients $\mathbf{A}_{ik}$ depend heavily on archetypes that are, on average, rarely used to explain the rest of the data.:
\begin{equation}
\label{eq:usage_rarity}
U(r_i) = \sum_{k=1}^{K} A_{ik} \big(1 - \bar{\alpha}_k \big),
\quad \text{where} \quad 
\bar{\alpha}_k = \frac{1}{n} \sum_{j=1}^{n} A_{jk}.
\end{equation}

\paragraph{Final Suspicion Score (Fisher combination)}
We convert each per–metric score into a within–batch empirical right–tail $p$–value. 
For metric $M\in\{L,D,U\}$ and response $r_i$,
\begin{equation}
\label{eq:empirical_p}
p_i^{(M)} \;=\; \frac{1 + \left|\,\{\, j \in \{1,\dots,n\} : \; s_j^{(M)} \ge s_i^{(M)} \,\}\,\right|}{\,n+1\,},
\end{equation}
where $s_i^{(M)}$ is oriented so that larger values indicate greater suspicion. 
We then combine the three $p$–values using Fisher’s method to obtain the final suspicion statistic
\begin{equation}
\label{eq:fisher}
S(r_i) \;=\; -2 \sum_{M\in\{L,D,U\}} \log p_i^{(M)}.
\end{equation}

\paragraph{Relation to prior local UQ baselines and intuition}
Our two geometric terms align closely with established graph-based baselines, namely Degree and Eccentricity \citep{lin2024generatingconfidenceuncertaintyquantification}. \emph{Local Density} is a pointwise analogue of \emph{Degree} on an answer–answer affinity graph: degree is a (monotone) transform of local similarity mass, while \eqref{eq:local_density} measures the corresponding local spacing in $\mathbf{X}$. \emph{Distance from Consensus} mirrors \emph{Eccentricity} (distance to a central embedding in a low-dimensional spectral space): both penalize globally off-centre responses, even when they form dense clusters. \emph{Usage Rarity} contributes complementary geometric signal from archetypal analysis by highlighting reliance on rarely used (hull-boundary) archetypes, which our global theory links to increased dispersion. Together, the three terms provide local sparsity (degree–like), global deviation (eccentricity–like), and boundary reliance (archetypal) evidence.

To derive and support our specific choice of terms, we conduct an analysis of the datasets generated for our global uncertainty experiments, which can be found in Appendix Section \ref{appendix:term-selection}. In Figure \ref{fig:local-examples}, we show examples where component metrics used in isolation would not select an optimal response from a batch, whereas their combination overcomes individual limitations. Degree / Local Density used in isolation, for instance, struggle in situations where there exist multiple dense clusters in the semantic response set.

\subsection{Hallucination Detection}
Our geometric volume metric can be used to detect hallucinations at the response set level. Given $r_{\text{default}}$ and a corresponding set of responses with embedding matrix $\mathbf{X}$, we classify $r_{\text{default}}$ as hallucinated if the global uncertainty exceeds a threshold $\tau$, i.e., if $H_G(\mathbf{X}) > \tau$. The threshold $\tau$ is a tunable hyperparameter, which can be optimized on a small labeled validation set to maximize a classification objective such as F1-score.

\subsection{Hallucination Correction}

Beyond merely identifying uncertain responses, the primary application of our suspicion score $S(r)$ is to actively improve model reliability. We employ it in a Best-of-N (BoN) sampling framework, where the goal is to select the most plausible response from a set of $n$ candidates generated for a given question.

Given a set of $n$ sampled responses $R = \{r_1, ..., r_n\}$ for a single prompt, our uncertainty-guided selection process chooses the single response, $r^*$, that exhibits the minimum suspicion score. This approach replaces the model's default (and potentially hallucinated) answer with one deemed most reliable by our proposed metric. 

To quantify the efficacy of this selection strategy, we measure the net reduction in the hallucination rate across a dataset. First, we define the baseline hallucination rate, $H_{\text{baseline}}$, as the proportion of questions in a test set for which the model's default answer $r_{\text{default}}$ was a hallucination.

Next, we define the post-selection hallucination rate, $H_{\text{BoN-U}}$, as the proportion of questions where the answer $r^*$ selected by our uncertainty-guided method is a hallucination.

The overall improvement is captured by the Delta Hallucination Rate, $\Delta H$, which is the absolute reduction in the hallucination rate achieved by our method.
\begin{equation}
\label{eq:delta_hr}
\Delta H = H_{\text{baseline}} - H_{\text{BoN-U}}.
\end{equation}
A positive $\Delta H$ indicates that the suspicion score is successfully guiding the selection towards non-hallucinated responses, thereby demonstrating the practical utility of the local uncertainty measure.

\section{Experiments}
We conduct a series of experiments to evaluate the effectiveness of our geometric framework for hallucination detection. We assess its performance under external uncertainty (prompt ambiguity) and internal uncertainty (fact-checking). We also evaluate uncertainty methods in more realistic, long-form medical question-answering scenarios that better reflect real-world conditions.

\subsection{Benchmarks and Baselines}

We evaluate on five benchmarks covering both external and internal uncertainty: 
CLAMBER (ambiguous prompts), TriviaQA (short-form QA), ScienceQA (scientific reasoning), 
MedicalQA (high-stakes medical QA), and K-QA (real-world medical scenarios). 
These benchmarks span ambiguous queries, factual QA, and long-form medical domains. 
Dataset construction details are provided in Section \ref{appendix:expt-setup} of the Appendix. We compare our method with Semantic Volume \citep{li2025semantic}, Semantic Entropy \citep{farquhar2024detecting}, and P(true), which estimates the probability that a model's generated response is correct by directly asking the model itself to self-assess its confidence \citep{lin2024generatingconfidenceuncertaintyquantification}.

\begin{table*}[!t]
\centering
\caption{Results across five benchmarks and four LLMs comparing external uncertainty estimation methods. Reported values are mean\textsubscript{std}. Blue cells indicate best results from \textbf{Ours}, red cells indicate best results from baselines. AUC = AUROC.}
\label{tab:all_benchmarks_compact}
\small
\setlength{\tabcolsep}{4pt}
\renewcommand{\arraystretch}{1.2}
\begin{tabular}{lcc cc cc cc cc}
\toprule
& \multicolumn{2}{c}{\textbf{CLAMBER}} & \multicolumn{2}{c}{\textbf{TriviaQA}} & \multicolumn{2}{c}{\textbf{ScienceQA}} & \multicolumn{2}{c}{\textbf{MedicalQA}} & \multicolumn{2}{c}{\textbf{K-QA}} \\
\cmidrule(lr){2-3} \cmidrule(lr){4-5} \cmidrule(lr){6-7} \cmidrule(lr){8-9} \cmidrule(lr){10-11}
\textbf{Method} & F1 & AUC & F1 & AUC & F1 & AUC & F1 & AUC & F1 & AUC \\
\midrule
\rowcolor{gray!15}\multicolumn{11}{l}{\textbf{GPT-4o Mini}} \\

$p(\mathrm{true})$ & 47.0\textsubscript{1.6} & 58.5\textsubscript{0.4} & 59.2\textsubscript{3.6} & 65.8\textsubscript{4.0} & 53.0\textsubscript{4.0} & \cellcolor{bestOther}{77.6\textsubscript{3.4}} & \cellcolor{bestOther}{73.6\textsubscript{1.2}} & 59.6\textsubscript{1.0} & 61.5\textsubscript{4.7} & 63.4\textsubscript{1.5} \\
Sem. Entropy & 63.9\textsubscript{0.3} & 48.1\textsubscript{0.8} & 67.7\textsubscript{3.4} & 75.3\textsubscript{0.3} & 48.6\textsubscript{2.7} & 65.3\textsubscript{1.6} & 70.9\textsubscript{1.0} & 59.3\textsubscript{1.0} & 60.4\textsubscript{4.8} & 50.5\textsubscript{5.7} \\
Sem. Volume & \cellcolor{bestOther}{68.5\textsubscript{0.4}} & 55.8\textsubscript{0.9} & 69.9\textsubscript{1.0} & 74.4\textsubscript{0.3} & 54.0\textsubscript{4.4} & 57.9\textsubscript{1.5} & 73.0\textsubscript{1.2} & 59.4\textsubscript{1.0} & 68.2\textsubscript{1.6} & 65.2\textsubscript{1.1} \\
\textbf{Ours} & 66.1\textsubscript{0.3} & \cellcolor{bestOurs}{61.7\textsubscript{0.7}} & \cellcolor{bestOurs}{71.1\textsubscript{1.4}} & \cellcolor{bestOurs}{76.7\textsubscript{0.3}} & \cellcolor{bestOurs}{58.7\textsubscript{3.0}} & 68.8\textsubscript{2.2} & \cellcolor{bestOurs}{73.6\textsubscript{1.2}} & \cellcolor{bestOurs}{60.9\textsubscript{1.4}} & \cellcolor{bestOurs}{68.6\textsubscript{0.8}} & \cellcolor{bestOurs}{67.7\textsubscript{1.3}} \\
\midrule
\rowcolor{gray!15}\multicolumn{11}{l}{\textbf{GPT-3.5-Turbo}} \\
$p(\mathrm{true})$ & 53.0\textsubscript{0.5} & 60.1\textsubscript{0.6} & 49.6\textsubscript{1.0} & 68.2\textsubscript{2.7} & 57.0\textsubscript{5.6} & \cellcolor{bestOther}{76.1\textsubscript{1.5}} & 60.0\textsubscript{11.1} & \cellcolor{bestOther}{71.9\textsubscript{1.9}} & 71.8\textsubscript{1.4} & 60.5\textsubscript{1.2} \\
Sem. Entropy & 66.6\textsubscript{0.2} & 53.8\textsubscript{0.2} & 69.6\textsubscript{1.2} & 75.6\textsubscript{0.4} & 62.3\textsubscript{3.3} & 69.2\textsubscript{1.5} & 71.3\textsubscript{1.5} & 68.7\textsubscript{0.3} & 40.3\textsubscript{6.9} & 66.4\textsubscript{1.2} \\
Sem. Volume & 68.5\textsubscript{0.1} & 54.4\textsubscript{0.6} & 71.7\textsubscript{0.6} & 74.9\textsubscript{0.8} & 64.3\textsubscript{3.2} & 62.7\textsubscript{4.5} & 73.1\textsubscript{0.3} & 61.7\textsubscript{0.6} & 74.8\textsubscript{1.0} & 65.3\textsubscript{0.7} \\
\textbf{Ours }  & \cellcolor{bestOurs}{82.5\textsubscript{2.6}} & \cellcolor{bestOurs}{61.2\textsubscript{1.3}} & \cellcolor{bestOurs}{71.8\textsubscript{1.5}} & \cellcolor{bestOurs}{76.7\textsubscript{0.4}} & \cellcolor{bestOurs}{70.1\textsubscript{3.7}} & 73.4\textsubscript{3.6} & \cellcolor{bestOurs}{73.9\textsubscript{1.1}} & 64.9\textsubscript{0.4} & \cellcolor{bestOurs}{75.4\textsubscript{0.4}} & \cellcolor{bestOurs}{67.8\textsubscript{3.2}} \\
\midrule
\rowcolor{gray!15}\multicolumn{11}{l}{\textbf{Qwen3-8b}} \\
$p(\mathrm{true})$ & 55.0\textsubscript{0.8} & 57.1\textsubscript{0.3} & 64.1\textsubscript{3.6} & \cellcolor{bestOther}{86.0\textsubscript{1.1}} & 41.3\textsubscript{4.3} & 66.6\textsubscript{1.2} & 74.4\textsubscript{1.3} & 66.9\textsubscript{1.7} & 72.8\textsubscript{0.8} & 66.3\textsubscript{1.4} \\
Sem. Entropy & 63.8\textsubscript{0.2} & 49.3\textsubscript{0.5} & \cellcolor{bestOther}{78.1\textsubscript{0.6}} & 84.2\textsubscript{0.3} & 22.2\textsubscript{19.3} & 58.9\textsubscript{0.8} & 73.2\textsubscript{0.5} & \cellcolor{bestOther}{71.1\textsubscript{0.6}} & 73.4\textsubscript{1.1} & \cellcolor{bestOther}{72.5\textsubscript{0.8}} \\
Sem. Volume & \cellcolor{bestOther}{67.1\textsubscript{0.2}} & 55.3\textsubscript{0.2} & 74.4\textsubscript{1.4} & 80.8\textsubscript{0.7} & 66.6\textsubscript{0.1} & 48.1\textsubscript{0.6} & 74.4\textsubscript{0.8} & 69.1\textsubscript{0.3} & 73.3\textsubscript{1.1} & 69.8\textsubscript{0.2} \\
\textbf{Ours}  & 65.9\textsubscript{0.1} & \cellcolor{bestOurs}{59.1\textsubscript{0.4}} & 76.9\textsubscript{0.3} & 82.1\textsubscript{0.4} & \cellcolor{bestOurs}{66.7\textsubscript{0.2}} & \cellcolor{bestOurs}{67.2\textsubscript{0.4}} & \cellcolor{bestOurs}{75.2\textsubscript{1.1}} & 69.5\textsubscript{0.6} & \cellcolor{bestOurs}{73.6\textsubscript{1.2}} & 69.8\textsubscript{0.2} \\
\midrule
\rowcolor{gray!15}\multicolumn{11}{l}{\textbf{Llama3.1-8b}} \\
$p(\mathrm{true})$ & 62.8\textsubscript{2.0} & \cellcolor{bestOther}62.0\textsubscript{0.5} & 60.8\textsubscript{2.7} & 58.2\textsubscript{12.2} & 45.3\textsubscript{2.9} & 69.1\textsubscript{2.7} & 74.4\textsubscript{2.2} & \cellcolor{bestOther}{76.4\textsubscript{2.1}} & 55.8\textsubscript{3.8} & 67.9\textsubscript{4.0} \\
Sem. Entropy & {67.1\textsubscript{0.4}} & 58.0\textsubscript{0.7} & 71.0\textsubscript{1.0} & \cellcolor{bestOther}{76.9\textsubscript{0.6}} & 73.0\textsubscript{0.8} & 77.0\textsubscript{0.8} & \cellcolor{bestOther}{76.6\textsubscript{0.7}} & 72.0\textsubscript{1.2} & \cellcolor{bestOther}{82.8\textsubscript{1.9}} & 69.9\textsubscript{2.5} \\
Sem. Volume & 66.8\textsubscript{0.1} & 51.0\textsubscript{0.4} & 70.6\textsubscript{0.0} & 72.7\textsubscript{0.7} & 66.4\textsubscript{0.2} & 75.4\textsubscript{0.7} & 75.4\textsubscript{0.2} & 56.2\textsubscript{0.7} & 81.7\textsubscript{0.4} & 65.0\textsubscript{2.2} \\
\textbf{Ours }  & \cellcolor{bestOurs}{76.2\textsubscript{2.1}} & {54.9\textsubscript{1.2}} & \cellcolor{bestOurs}{71.3\textsubscript{0.5}} & 75.4\textsubscript{0.5} & \cellcolor{bestOurs}{78.6\textsubscript{0.7}} & \cellcolor{bestOurs}{81.6\textsubscript{0.4}} & 75.3\textsubscript{0.2} & 67.2\textsubscript{0.9} & 82.7\textsubscript{0.1} & \cellcolor{bestOurs}{72.8\textsubscript{1.1}} \\
\bottomrule
\end{tabular}
\end{table*}

We evaluate our local uncertainty quantification methods on the same datasets curated for the global Internal Uncertainty experiments. We first filter each dataset to only contain questions where the set of sampled responses contains examples of both hallucinations and non-hallucinations. For each sample in the perturbed responses, we then compute the local uncertainty score using the archetypal reconstruction matrix $\mathbf{A}$ and embeddings $\mathbf{X}$. We compare our method with Degree and Eccentricity \citep{lin2024generatingconfidenceuncertaintyquantification}, which are based on graphs derived from pairwise response comparisons. We report $\Delta H$ and AUARC (Area Under the Accuracy--Rejection Curve). The latter is computed per question by ranking responses by uncertainty, progressively rejecting the top-$k$, measuring accuracy on the retained set, and averaging over all rejection rates ($k=0,\dots,n-1$). A random predictor yields AUARC equal to the base accuracy.

\begin{table*}[t]
\centering
\caption{Comparison of local uncertainty methods on $\Delta$H (left) and AUARC (right). Values are percentages; subscripts denote standard deviation across runs. Blue cells indicate the best value among \textbf{Ours}; red cells indicate the best baseline.}
\label{tab:local_unc_auarc}
\small
\begin{tabular}{lcccccccc}
\toprule
\textbf{Method} & \multicolumn{2}{c}{\textbf{TriviaQA}} & \multicolumn{2}{c}{\textbf{ScienceQA}} & \multicolumn{2}{c}{\textbf{MedicalQA}} & \multicolumn{2}{c}{\textbf{K-QA}} \\
\cmidrule(lr){2-3} \cmidrule(lr){4-5} \cmidrule(lr){6-7} \cmidrule(lr){8-9}
\textbf{Method} & $\Delta$H & AUARC & $\Delta$H & AUARC & $\Delta$H & AUARC & $\Delta$H & AUARC \\
\midrule
\rowcolor{gray!15} \multicolumn{9}{l}{\textbf{GPT-4o Mini}} \\
Eccentricity & -1.0\textsubscript{1.1} & 48.3\textsubscript{1.0} & 4.9\textsubscript{0.9} & 16.4\textsubscript{4.9} & 2.7\textsubscript{1.0} & 55.7\textsubscript{2.1} & 4.9\textsubscript{4.0} & 58.5\textsubscript{0.2} \\
Degree & \cellcolor{bestOther}{7.7\textsubscript{2.5}} & \cellcolor{bestOther}{51.9\textsubscript{0.3}} & 5.7\textsubscript{0.6} & 16.8\textsubscript{4.6} & 7.0\textsubscript{3.7} & 57.8\textsubscript{2.4} & 15.1\textsubscript{1.9} & 63.7\textsubscript{0.3} \\
\textbf{Ours} & 6.9\textsubscript{1.1} & \cellcolor{bestOurs}{51.9\textsubscript{0.2}} & \cellcolor{bestOurs}{6.2\textsubscript{1.0}} & \cellcolor{bestOurs}{17.3\textsubscript{4.8}} & \cellcolor{bestOurs}{8.9\textsubscript{1.6}} & \cellcolor{bestOurs}{58.6\textsubscript{1.8}} & \cellcolor{bestOurs}{15.3\textsubscript{2.9}} & \cellcolor{bestOurs}{64.1\textsubscript{0.2}} \\
\midrule
\rowcolor{gray!15} \multicolumn{9}{l}{\textbf{GPT-3.5-Turbo}} \\
Eccentricity & 1.0\textsubscript{2.0} & 55.8\textsubscript{0.2} & \cellcolor{bestOther}{5.5\textsubscript{1.4}} & \cellcolor{bestOther}{25.8\textsubscript{5.7}} & 1.7\textsubscript{2.0} & 51.5\textsubscript{1.2} & 20.8\textsubscript{4.5} & 64.3\textsubscript{0.2} \\
Degree & 5.7\textsubscript{0.8} & \cellcolor{bestOther}{58.1\textsubscript{1.0}} & 4.5\textsubscript{2.9} & 25.1\textsubscript{6.6} & 8.7\textsubscript{1.4} & 54.4\textsubscript{0.6} & \cellcolor{bestOther}{32.2\textsubscript{3.2}} & \cellcolor{bestOther}{68.8\textsubscript{1.6}} \\
\textbf{Ours} & \cellcolor{bestOurs}{6.0\textsubscript{0.6}} & \cellcolor{bestOurs}{58.1\textsubscript{0.7}} & 2.8\textsubscript{2.5} & 24.5\textsubscript{6.8} & \cellcolor{bestOurs}{8.8\textsubscript{0.6}} & \cellcolor{bestOurs}{55.0\textsubscript{0.5}} & 31.3\textsubscript{3.0} & 68.3\textsubscript{1.7} \\
\midrule
\rowcolor{gray!15} \multicolumn{9}{l}{\textbf{Qwen3-8b}} \\
Eccentricity & -9.2\textsubscript{1.8} & 47.3\textsubscript{0.9} & 2.2\textsubscript{2.3} & 13.2\textsubscript{0.9} & -1.0\textsubscript{1.6} & 49.7\textsubscript{2.9} & 9.8\textsubscript{4.8} & 55.2\textsubscript{2.8} \\
Degree & -0.7\textsubscript{1.6} & \cellcolor{bestOther}{51.6\textsubscript{0.8}} & \cellcolor{bestOther}{3.7\textsubscript{3.3}} & 13.7\textsubscript{0.1} & 7.5\textsubscript{2.2} & 53.4\textsubscript{2.3} & 15.3\textsubscript{8.6} & 58.5\textsubscript{3.6} \\
\textbf{Ours} & \cellcolor{bestOurs}{-0.5\textsubscript{0.7}} & \cellcolor{bestOurs}{51.6\textsubscript{0.8}} & \cellcolor{bestOurs}{3.7\textsubscript{3.3}} & \cellcolor{bestOurs}{14.8\textsubscript{0.5}} & \cellcolor{bestOurs}{8.0\textsubscript{1.1}} & \cellcolor{bestOurs}{54.1\textsubscript{2.4}} & \cellcolor{bestOurs}{19.0\textsubscript{7.2}} & \cellcolor{bestOurs}{59.9\textsubscript{3.2}} \\
\midrule
\rowcolor{gray!15} \multicolumn{9}{l}{\textbf{Llama3.1-8b}} \\
Eccentricity & -2.6\textsubscript{0.3} & 42.2\textsubscript{0.2} & \cellcolor{bestOther}{8.2\textsubscript{1.8}} & 25.0\textsubscript{1.6} & -7.5\textsubscript{2.9} & 41.8\textsubscript{0.8} & 9.2\textsubscript{3.9} & 44.8\textsubscript{0.6} \\
Degree & \cellcolor{bestOther}{4.5\textsubscript{2.1}} & \cellcolor{bestOther}{45.5\textsubscript{0.8}} & 7.0\textsubscript{3.0} & 24.5\textsubscript{2.1} & 4.0\textsubscript{3.0} & 47.2\textsubscript{0.8} & 12.3\textsubscript{3.3} & 46.4\textsubscript{1.6} \\
\textbf{Ours} & 2.1\textsubscript{1.7} & 44.7\textsubscript{0.6} & 8.0\textsubscript{2.9} & \cellcolor{bestOurs}{25.5\textsubscript{2.0}} & \cellcolor{bestOurs}{5.2\textsubscript{1.2}} & \cellcolor{bestOurs}{48.6\textsubscript{1.3}} & \cellcolor{bestOurs}{15.3\textsubscript{3.6}} & \cellcolor{bestOurs}{46.9\textsubscript{1.3}} \\
\midrule
\bottomrule
\end{tabular}
\end{table*}

\section{Results} We evaluate our geometric framework for hallucination detection across multiple benchmarks and models. For global uncertainty, we test Geometric Volume’s ability to classify $r_\text{default}$ as reliable or hallucinated given a response set. For local uncertainty, we assess the Geometric Suspicion score in reducing hallucinations via a Best-of-N selection strategy. Each experiment is repeated three times to mitigate sampling stochasticity, and we report the mean and standard deviation of all metrics.
\subsection{Global Uncertainty Detection}

Geometric Volume shows strong performance across all benchmarks and models investigated. On K-QA, it significantly outperforms baselines, achieving the best F1 Score and AUROC with GPT-3.5-Turbo (75.4 and 67.8) and the strongest AUROC with GPT-4o Mini (67.7) and Llama3.1-8b (72.8). On MedicalQA, it remains highly competitive, delivering top F1 Scores with GPT-3.5-Turbo (73.9) and Qwen3-8b (75.2), as well as the best AUROC with GPT-4o Mini (60.9)  (Table \ref{tab:all_benchmarks_compact}). 

On CLAMBER, which tests detection of ambiguous prompts, Geometric Volume achieves the highest AUROC with GPT-4o Mini (61.7) and Qwen3-8b (59.1), while GPT-3.5-Turbo and Llama3.1-8b reach the highest F1 Scores. In TriviaQA, it results in the highest F1 and AUROC with GPT-4o Mini (71.1, 76.7) and GPT-3.5-Turbo (71.8 F1, 76.7 AUROC). On ScienceQA, it shows particular strength with Llama3.1-8b, where it delivers 78.6 F1 and 81.6 AUROC. These results highlight its ability to model semantic dispersion effectively. 


\subsection{Reducing Hallucination Rates with Local Uncertainty}
We show in Table \ref{tab:local_unc_auarc} that using our local uncertainty metric, we are able to reduce hallucination rates across all models and datasets investigated. In general it surpasses baselines across models and datasets, with particularly dominant performance in the MedicalQA task. 

We attribute this performance primarily due to the combination of terms employed, compared to baselines which use only one term. In Figure \ref{fig:local-examples} we show how reliance on one notion of `confidence', such as local density or distance from consensus, would lead to Best-of-N selection of a hallucinated response, whereas our composite metric selects a correct response.

\begin{figure}[t]
    \centering
    
    \begin{subfigure}[b]{0.45\textwidth}
        \centering
        \includegraphics[width=\textwidth]{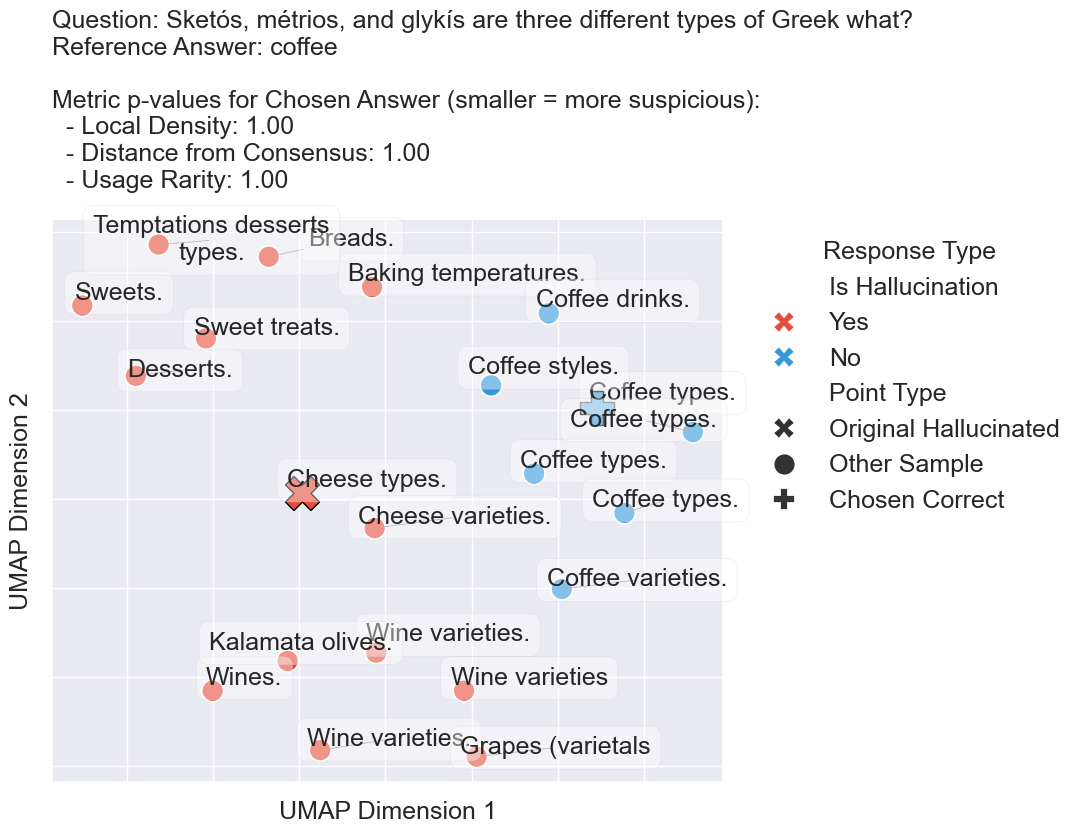}
        \caption{A high p-score for each component}
    \end{subfigure}
    \begin{subfigure}[b]{0.45\textwidth}
        \centering
        \includegraphics[width=\textwidth]{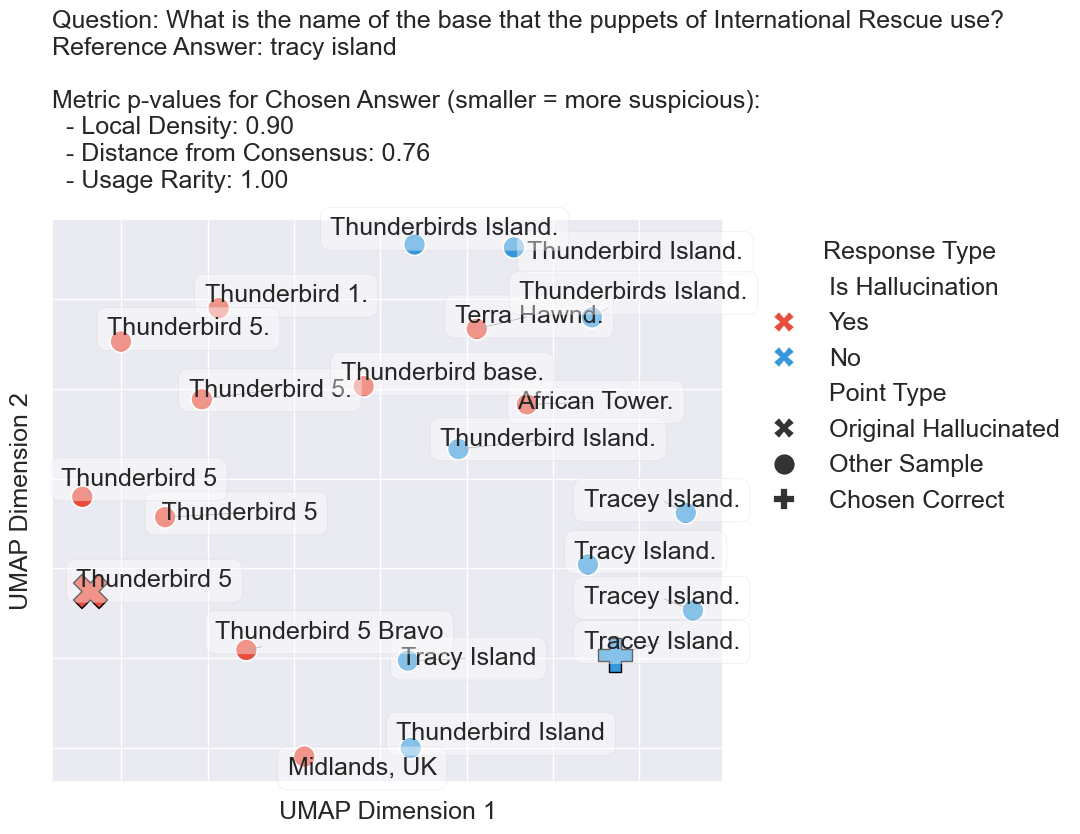}
        \caption{Significant usage rarity contribution}
    \end{subfigure}
    
    \begin{subfigure}[b]{0.45\textwidth}
        \centering
        \includegraphics[width=\textwidth]{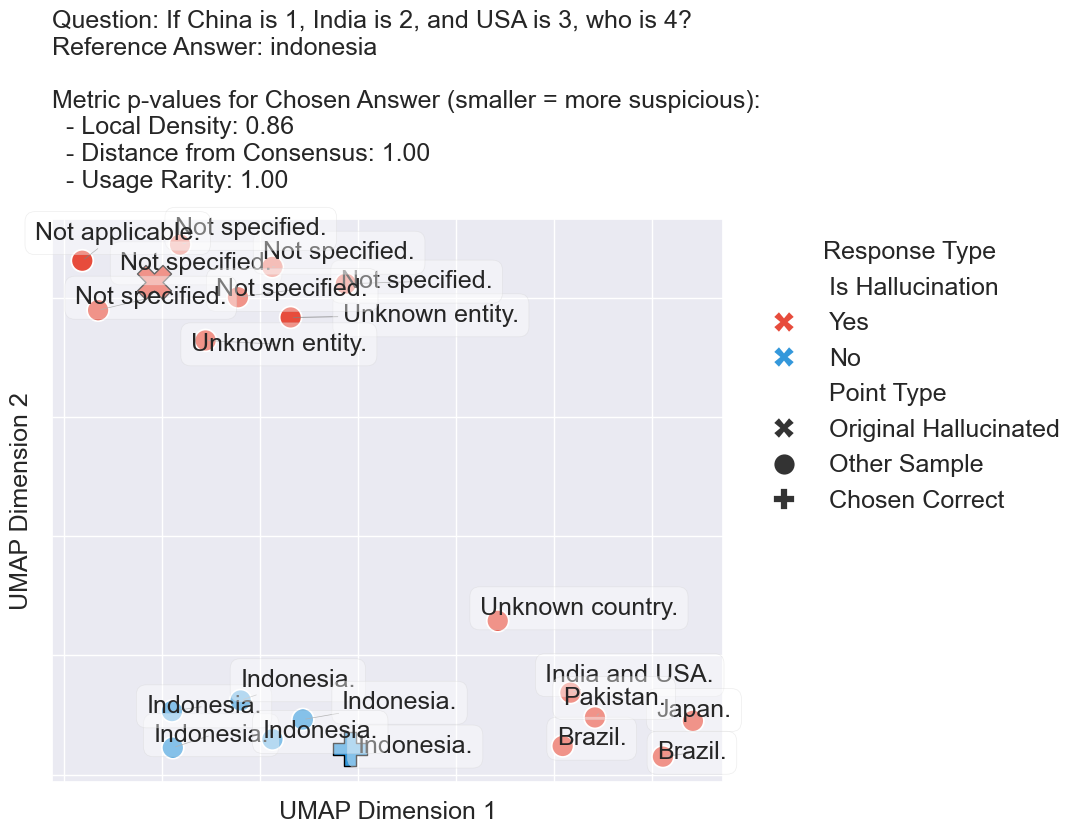}
        \caption{Multiple dense clusters}
    \end{subfigure}
    \begin{subfigure}[b]{0.45\textwidth}
        \centering
        \includegraphics[width=\textwidth]{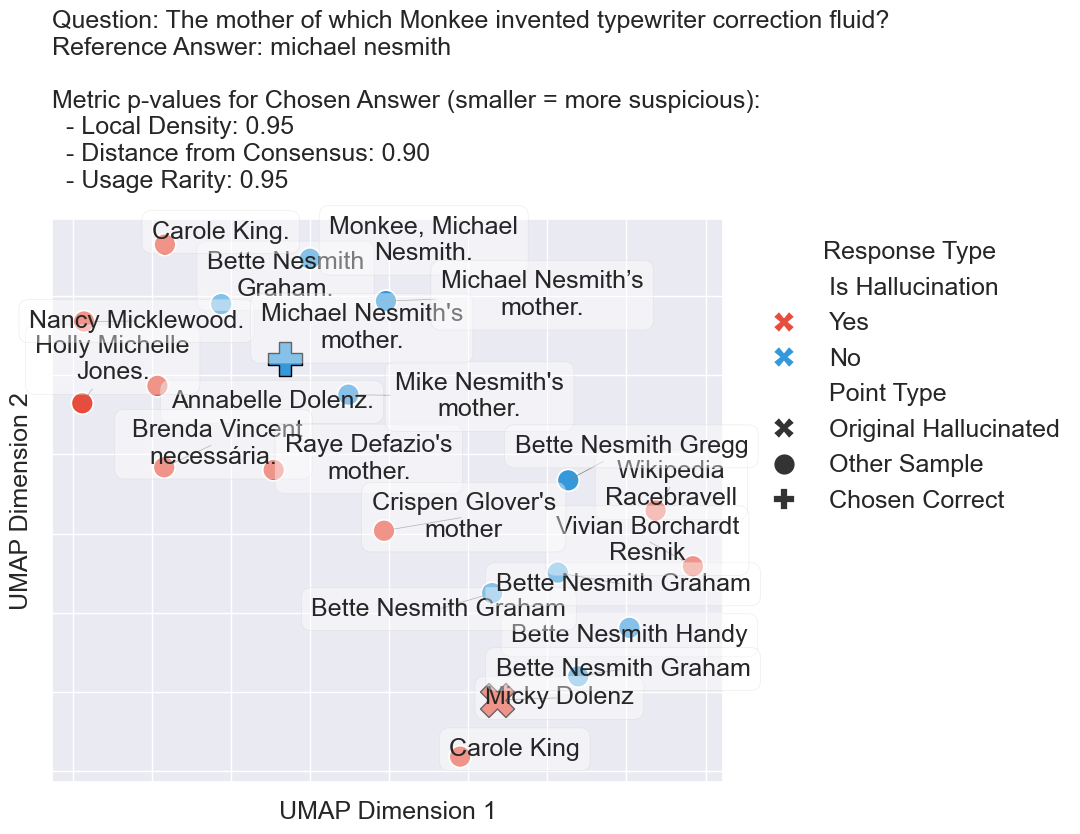}
        \caption{A diverse answer batch}
    \end{subfigure}
    
    \caption{UMAP plots for cases where we were able to correct hallucinations with our local uncertainty metric, using the TriviaQA dataset and GPT-4o-mini model. In each plot, the red X shows $r_{\text{default}}$, which was determined to be a hallucination by a judge LLM. The other points show the batch of $n=20$ answers sampled from the same model with a temperature of one. The blue cross shows the answer selected by our framework (i.e. with lowest suspicion), which was determined by the same judge LLM to be a non-hallucination.}
    \label{fig:local-examples}
\end{figure}

Figure \ref{fig:local-examples}(a) shows a straightforward example where all three of the component terms have high p-values. The correct answer is in a dense local cluster, near a global consensus point, and can be reconstructed using commonly used archetypes.
Figure \ref{fig:local-examples}(b) shows a slightly harder example, where there are multiple clusters of semantically similar answers. Here the desired answer is not within the densest local cluster or the closest to a global consensus, but has lower suspicion due to its non-reliance on rarely used archetypes.
The example in Figure \ref{fig:local-examples}(c) can be used to understand the necessity of both local density and distance from consensus metrics. Here we have a few dense clusters, resulting in high local density for the majority of the batch. The global semantic consensus point however is closer to the `Indonesia' cluster than it is the `Not specified' or `Unknown entity' clusters. This is because the hallucinated answers of `Brazil' or `Pakistan' are countries and semantically closer to the correct answer, moving the global consensus further in this direction.
Finally, Figure \ref{fig:local-examples}(d) shows our framework can be useful in cases of diverse answer batches without dense local clusters. In this example approximately half of the answers are semantically unique, but the use of commonly used archetypes in the `Michael Nesmith' cluster allows us to find a low suspicion example which proves to be correct.

An interesting case is Qwen3-8B for the TriviaQA dataset, where all methods tested fail to reduce the hallucination rate. We find that compared to other model/dataset combinations, in this case the model is more often \textit{confidently wrong} when it hallucinates: that is, very few of the sampled answers are non-hallucinations, so choosing an optimal answer becomes a difficult task. We comment on this method limitation in Section \ref{sec:discussion} and provide the evidence for this conclusion in Appendix Section \ref{appendix:app-context}.

\section{Discussion}
\label{sec:discussion}

Our geometric framework views language model UQ as modelling a probability distribution over answers in semantic space. The entropy of this distribution, bounded by geometric volume, yields useful global uncertainty signals. Like previous methods such as Semantic Density, we then frame local UQ as finding optimal points within the distribution. Our convex hull approach is a novel method for defining the semantic distribution boundaries, whilst our local metric approximates a `confidence' distribution within the boundaries without access to model likelihoods.

Our framework has several limitations, which we aim to address in future work. Firstly, it relies on the capability of a single embedding vector to sufficiently capture the semantic content of an answer. As answers get longer and semantically more complex, this assumption may break down. Recent work has investigated this and suggests that the principle of semantic distribution analysis with a single embedding vector holds promise for answers of up to 1000 words \citep{bhardwaj2025embeddingtrustsemanticisotropy}, but it remains to be proven for complex datasets beyond QA tasks.

Our method relies on the principle of semantic dispersion in sampled answers predicting hallucination. It is therefore most effective in settings where the model is uncertain and wrong, i.e. there is sufficient diversity in the sampled answers. There exist cases where the model is confident and wrong (no diversity in sampled answers). White-box methods such as factuality probes \citep{han-etal-2025-simple} may be able to address these cases, but to our knowledge no black-box methods have successfully solved this problem.

\section{Conclusion}

We presented a geometric framework for detecting and mitigating hallucinations in large language models using only black-box access. At the global level, Geometric Volume quantifies semantic dispersion through convex hull analysis of archetypes and provides a principled proxy for uncertainty. At the local level, Geometric Suspicion identifies unreliable responses within a batch and enables a Best-of-N selection strategy that consistently reduces hallucination rates across benchmarks. Our experiments demonstrate strong performance in both standard QA tasks and high-stakes medical domains, while our theoretical analysis establishes a direct connection between convex geometry and entropy. These results highlight geometry as a powerful tool for linking global dispersion with local reliability, and open new directions for uncertainty estimation that are interpretable, data-efficient, and broadly applicable.

\section{Acknowledgments}

DC was funded by an NIHR Research Professorship; a Royal Academy of Engineering Research Chair; and the InnoHK Hong Kong Centre for Cerebro-cardiovascular Engineering (COCHE); and was supported by the National Institute for Health Research (NIHR) Oxford Biomedical Research Centre (BRC) and the Pandemic Sciences Institute at the University of Oxford. EP was funded by an NIHR Research Studentship. SW is supported by the Rhodes Scholarship. This work was supported by the NVIDIA Academic Grant Program.

\bibliography{iclr2026_conference}
\bibliographystyle{iclr2026_conference}

\appendix
\onecolumn
\section{Theoretical Analysis}

We now present a theoretical analysis to formally justify the different components for the proposed uncertainty estimation framework.

\begin{corollary}
Let a response be represented as a convex combination of archetypes lying in a simplex \( \Delta \subset \mathbb{R}^d \) with volume \( V > 0 \). Then the differential entropy of \( \mathbf{x} \), with density \( p(\mathbf{x}) \) supported on \( \Delta \), satisfies:
\begin{equation}
H(\mathbf{x}) \le \log V.
\end{equation}
Consequently, the volume \( V \) provides a geometric upper bound on the differential entropy of a response distribution defined within the simplex. Differential entropy in this context is analogous to the semantic uncertainty of a response set. This bound is computationally efficient to estimate and can be used to detect low-confidence or out-of-distribution inputs without explicitly modeling \( p(\mathbf{x}) \).
\end{corollary}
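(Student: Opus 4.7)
The plan is to establish Theorem~\ref{thm:entr} by the standard maximum-entropy argument: among all distributions supported on a fixed bounded set, the uniform distribution attains the largest differential entropy, and this maximum equals the logarithm of the volume of the support. The only subtlety here is that the simplex $\Delta$ has affine dimension $k-1$, typically strictly less than the ambient dimension $d$, so the relevant reference measure is the $(k{-}1)$-dimensional Hausdorff measure $\mathcal{H}^{k-1}$ restricted to the affine hull of $\Delta$, not the ambient Lebesgue measure on $\mathbb{R}^d$. Once this is set up, the result follows from the non-negativity of Kullback--Leibler divergence, i.e.\ Gibbs' inequality.

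First I would define the uniform density on $\Delta$ with respect to $\mathcal{H}^{k-1}$ by
\begin{equation}
u(\mathbf{x}) \;=\; \frac{1}{V}\,\mathbf{1}_{\Delta}(\mathbf{x}),
\end{equation}
and verify that $u$ integrates to $1$ against $\mathcal{H}^{k-1}$, which is immediate from the definition $V = \mathcal{H}^{k-1}(\Delta)$. Affine independence of the archetypes guarantees $V>0$, so $u$ is well-defined. Next I would compute the KL divergence of $p$ from $u$:
\begin{equation}
D_{\mathrm{KL}}(p\,\|\,u) \;=\; \int_{\Delta} p(\mathbf{x})\log\frac{p(\mathbf{x})}{u(\mathbf{x})}\, d\mathcal{H}^{k-1}(\mathbf{x}) \;=\; -H(\mathbf{x}) \;+\; \log V,
\end{equation}
using $\int_\Delta p\, d\mathcal{H}^{k-1} = 1$ to absorb the constant term $\log V$.

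Then I would invoke Gibbs' inequality, $D_{\mathrm{KL}}(p\,\|\,u)\ge 0$, which follows from applying Jensen's inequality to the strictly convex function $t\mapsto t\log t$ (or equivalently to $-\log$); absolute continuity of $p$ with respect to $\mathcal{H}^{k-1}$ ensures that the Radon--Nikodym derivative is well-defined and the integral is meaningful. Rearranging gives $H(\mathbf{x}) \le \log V$. For the equality condition, strict convexity of $-\log$ forces $p(\mathbf{x})/u(\mathbf{x})$ to be $\mathcal{H}^{k-1}$-a.e.\ constant on $\Delta$, which combined with the normalization of both densities yields $p = u$ almost everywhere, i.e.\ $\mathbf{x}$ is uniformly distributed on $\Delta$.

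The main obstacle, and essentially the only non-routine part, is the measure-theoretic bookkeeping: one must be careful that $H(\mathbf{x})$ is interpreted as the differential entropy with respect to $\mathcal{H}^{k-1}$ on the affine span of $\Delta$, not with respect to $d$-dimensional Lebesgue measure (which would assign $\Delta$ zero volume and render the entropy $-\infty$). Once this convention is fixed, as it is in the theorem statement through the explicit reference to $\mathcal{H}^{k-1}$, the proof reduces to two lines of algebra plus a citation of Gibbs' inequality; no iterative or variational argument is required.
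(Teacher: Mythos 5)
Your proposal is correct and follows essentially the same route as the paper: the paper derives the corollary directly from Theorem~\ref{thm:entr}, whose proof likewise writes $H(\mathbf{x}) = \log V - D_{\mathrm{KL}}(p\,\|\,u)$ for the uniform reference density $u = 1/V$ on $\Delta$ (with respect to $\mathcal{H}^{k-1}$) and invokes non-negativity of the KL divergence, with equality iff $p=u$. Your additional care about the $(k{-}1)$-dimensional Hausdorff reference measure versus ambient Lebesgue measure is a welcome clarification but not a departure from the paper's argument.
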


\noindent This result follows directly from Theorem~\ref{thm:entr}, by applying the entropy–volume bound to the simplex associated with the response’s archetypal representation.

\subsection{Proof of Theorem 1}
\label{appendix:th1}

\begin{proof}

\noindent Let \( u(\mathbf{x}) = \frac{1}{V} \) denote the uniform distribution over \( \Delta \), where \( V \) is the volume of \( \Delta \) under the \((k{-}1)\)-dimensional Hausdorff measure \( \mathcal{H}^{k-1} \). The differential entropy of \( u \) is:

\begin{equation}
H(u) = -\int_{\Delta} \frac{1}{V} \log\left( \frac{1}{V} \right) d\mathcal{H}^{k-1}(\mathbf{x}) = \log V,
\end{equation}
since \( \int_{\Delta} d\mathcal{H}^{k-1}(\mathbf{x}) = V \). \\

\noindent The Kullback–Leibler (KL) divergence from \( p \) to \( u \) is:
\begin{equation}
D_{\mathrm{KL}}(p \| u) = \int_{\Delta} p(\mathbf{x}) \log \left( \frac{p(\mathbf{x})}{u(\mathbf{x})} \right) d\mathcal{H}^{k-1}(\mathbf{x}).
\end{equation}
Substituting \( u(\mathbf{x}) = \frac{1}{V} \) yields:
\begin{equation}
D_{\mathrm{KL}}(p \| u) = \int_{\Delta} p(\mathbf{x}) \log p(\mathbf{x}) \, d\mathcal{H}^{k-1}(\mathbf{x}) + \log V.
\end{equation}
Rearranging terms, we obtain:
\begin{equation}
H(\mathbf{x}) = -\int_{\Delta} p(\mathbf{x}) \log p(\mathbf{x}) \, d\mathcal{H}^{k-1}(\mathbf{x}) = \log V - D_{\mathrm{KL}}(p \| u).
\end{equation}
By the non-negativity of KL divergence (Gibbs' inequality), \( D_{\mathrm{KL}}(p \| u) \geq 0 \), with equality if and only if \( p = u \) everywhere. Thus:
\begin{equation}
H(\mathbf{x}) \leq \log V,
\end{equation}
with equality if and only if \( \mathbf{x} \) is uniformly distributed over \( \Delta \).
\end{proof}

\section{Experimental Setup}
\label{appendix:expt-setup}
\subsection{Benchmarks and Baselines}
\paragraph{CLAMBER (External Uncertainty).} To evaluate the detection of external hallucinations arising from ambiguous user prompts, we use the CLAMBER benchmark \citep{zhang2024clamber}. We use 3,202 queries from the dataset, where the model's uncertainty should reflect the ambiguity of the input rather than a lack of internal knowledge.

\paragraph{TriviaQA (Internal Uncertainty).} We also evaluate on TriviaQA, a short-form question answering benchmark \citep{joshi2017triviaqa}. From the TriviaQA dataset, we construct a balanced test set of 1,000 samples, consisting of 500 hallucinated and 500 non-hallucinated examples.

\paragraph{ScienceQA Multiple Choice QA} We also used ScienceQA as a benchmark \citep{lu2022learn}, which is a multimodal multiple-choice dataset designed to assess scientific reasoning across natural, social, and language sciences. For our evaluation, we filtered the test set to include only text-only questions and created a balanced benchmark of 200 hallucinating and 200 non-hallucinating examples for each LLM.

\paragraph{MedicalQA (High-Stakes Domain).} To assess performance in a critical, high-stakes domain, we use a medical question-answering benchmark subset (500 examples) from MedQA and MedMcQA \citep{zhang2023huatuogpt}. This dataset tests the model's ability to provide factually correct answers to medical questions, where hallucinations carry significant risk. MedicalQA is a long-form question answering benchmark, unlike TriviaQA, and provides deeper insight into hallucination in real-world applications.

\paragraph{K-QA (Real-World Medical Scenarios).} Finally, to evaluate uncertainty in real-world medical scenarios, we leverage 201 real patient questions from the K-QA dataset \citep{manes2024k}, a long-form generation benchmark with expert-written answers reviewed by licensed physicians. These detailed responses serve as high-quality references for assessing the accuracy and reliability of model outputs.

\subsection{Dataset Curation}
\paragraph{External Uncertainty.} For the CLAMBER benchmark, we evaluate each of the 3,202 queries, where each $q_i$ is annotated with a binary label $y_i \in {0, 1}$ indicating whether it is ambiguous. We generate $n = 20$ perturbations $\{q_i^{(j)}\}_{j=1}^{n}$ for each query by prompting each LLM following previous work \citep{li2025semantic}, and obtain embeddings $\{x_i^{(j)}\}_{j=1}^{n}$ via the Alibaba-NLP/gte-Qwen2-1.5B-instruct model.

\paragraph{Internal Uncertainty.} To create hallucination detection datasets from traditional QA benchmarks, we convert question–answer pairs ${(q_i, a_i^{\text{ref}})}$ into labeled examples. For each question $q_i$, we generate a response $r_i = \text{LLM}(q_i)$ using LLM. We then assign a hallucination label $y_i \in \{0, 1\}$ by comparing $r_i$ to the reference answer $a_i^{\text{ref}}$. In high-stakes domains such as medical QA, this comparison is performed by an LLM-as-a-judge \citep{gu2024survey} (GPT-4o) using expert-annotated references. For example, given a real-world clinical question and its verified answer, we label $r_i$ as hallucinated if the LLM judge determines it deviates from the reference. For datasets like TriviaQA, where answers are typically short (1–3 words), we instead compute ROUGE-L F1 and label $r_i$ as hallucinated if $\text{ROUGE}(r_i, a_i^{\text{ref}}) < 0.3$, following previous work \citep{li2025semantic}. 
When initially answering questions, we set the generation temperature to 0. Later, for sampling perturbed responses, we use a temperature of 1.0 to induce diversity. Finally, for each $q_i$, we sample $n = 20$ perturbed responses $\{ r_i^{(j)} \}_{j=1}^n$ and embed them via $\{\mathbf{x_i^{(j)}} = E(r_i^{(j)}) \}_{j=1}^n$, using the same embedding pipeline as in the external uncertainty setting.

\subsection{Implementation Details}
We set the PCA dimension to 15 and number of archetypes $K = 16$ for each benchmark. The Archetypal Analysis optimization is run for 2000 steps on each batch of sampled responses. For a fair comparison, both our method and the Semantic Volume baseline require an uncertainty threshold $\tau$ to make a binary prediction (hallucination vs. not) \citep{li2025semantic}. Following prior work, we tune this threshold on a held-out validation split (10\% of the data), selecting the $\tau$ that maximizes the F1-score. Local uncertainty experiments are performed with $k=5$ nearest neighbours for the local density metric.

\paragraph{Baselines} For Eccentricity and Degree \citep{lin2024generatingconfidenceuncertaintyquantification}, we adapt the code from previous work (\url{https://github.com/zlin7/UQ-NLG}) and use the entailment variant with the Deberta-Large model (\url{https://huggingface.co/microsoft/deberta-large} and an eigenvalue threshold of 0.7.

\section{Ablations}

\subsection{Local Uncertainty}

We probe robustness of our local uncertainty estimator along three axes: number of sampled answers, PCA dimension, and number of archetypes.

First, Table~\ref{tab:abl_samples} varies the number of sampled answers $n$. We maintain our fixed answer pool of 20 answers from main experiments: subsets of size $n$ are sampled uniformly without replacement from each question’s fixed answer pool. When $n<n_{\text{all}}$ we evaluate $R=30$ repeated subsets per question. We fit PCA on each subset, so the effective dimensionality obeys $d'\!\le\!n-1$. We scale the archetype count as $K=\mathrm{round}(\rho n)$ with $\rho=K_\mathrm{base}/n_\mathrm{base}$ (16 / 20 for our chosen settings), then clip K to satisfy the assumptions of our theorem: $K\le n$ and $K\le d'+1$, ensuring a non-degenerate, identifiable simplex in $d'$ dimensions. As expected, the hallucination correction rate increases with higher $n$, as the sampled set becomes more likely to contain one or more correct answers.

Second, Tables~\ref{tab:abl_pca_fixedK} and \ref{tab:abl_pca_capacity} vary the PCA dimension $d'$ at fixed $n=20$: we show a fixed-$K$ view ($K=3$) and a capacity-matched view ($K=\min(d'+1,K_\mathrm{base},n)$) that respects the simplex capacity of a $d'$-dimensional space. In both cases, increasing $d'$ up to our chosen setting improves the hallucination detection performance. We suggest this is due to low PCA dimensions being unable to capture sufficient geometric details of the semantic answer space.

Finally, Table~\ref{tab:abl_K} sweeps the number of archetypes $K$ at fixed $d'$ (default $d'=15$) while enforcing the same feasibility limits. Interestingly our local metric is able to perform well even with very low numbers of archetypes; we suggest this is due to two of the three component terms using the raw answer embeddings, such that altering the `usage rarity' term does not drastically affect results.

\paragraph{Statistics.}
For each setting we compute per–question metrics (AUROC, AUARC, and $\Delta H$) and then aggregate.
In Table~\ref{tab:abl_samples} (varying $n$), each question is evaluated on $R=30$ i.i.d.\ subsets of size $n$ sampled uniformly without replacement from the fixed 20 answers; we refit PCA on each subset, compute the local score, and obtain the metric values per subset, then average across the $R$ subsets to get a single per–question estimate.
Within a run, we then average across questions that pass the filter.
Finally, the tables report mean$\pm$std across 3 independent runs.
In Tables~\ref{tab:abl_pca_fixedK}, \ref{tab:abl_pca_capacity} and \ref{tab:abl_K} there is no within–question resampling (one evaluation per question); we average across questions within a run and report mean$\pm$std across runs.

\begin{table}[t]
\centering
\caption{Ablation over the number of sampled answers $n$ per question. For each subset we refit PCA on the subset only. We report mean$\pm$std AUROC/AUARC across runs and the mean $\Delta H$.}
\label{tab:abl_samples}
\begin{tabular}{lrrr}
\toprule
$n$ & AUROC & AUARC & $\Delta H$ \\
\midrule
5 & $0.540\,\pm\,0.019$ & $0.576\,\pm\,0.017$ & $0.012$ \\
10 & $0.550\,\pm\,0.012$ & $0.593\,\pm\,0.015$ & $0.031$ \\
15 & $0.547\,\pm\,0.008$ & $0.587\,\pm\,0.019$ & $0.039$ \\
20 & $0.556\,\pm\,0.007$ & $0.585\,\pm\,0.019$ & $0.068$ \\
\bottomrule
\end{tabular}
\end{table}

\begin{table}[t]
\centering
\caption{Ablation over PCA dimension $d'$ at fixed $n=20$. We fit a single PCA basis on all 20 answers and compare by truncation. We fix $K=3$ so results remain feasible down to $d'=2$.}
\label{tab:abl_pca_fixedK}
\begin{tabular}{lrrr}
\toprule
$d'$ & AUROC & AUARC & $\Delta H$ \\
\midrule
2 & $0.518\,\pm\,0.005$ & $0.564\,\pm\,0.017$ & $-0.021$ \\
3 & $0.521\,\pm\,0.010$ & $0.567\,\pm\,0.015$ & $-0.022$ \\
5 & $0.535\,\pm\,0.003$ & $0.575\,\pm\,0.017$ & $0.044$ \\
10 & $0.551\,\pm\,0.008$ & $0.581\,\pm\,0.019$ & $0.032$ \\
15 & $0.554\,\pm\,0.003$ & $0.585\,\pm\,0.018$ & $0.062$ \\
\bottomrule
\end{tabular}
\end{table}

\begin{table}[t]
\centering
\caption{Ablation over PCA dimension $d'$ at fixed $n=20$ with capacity-matched archetypes: $K=\min(d'+1, K_\mathrm{base}, n)$. This reflects the geometric limit that at most $d'+1$ archetypes can be supported in a $d'$-dimensional space.}
\label{tab:abl_pca_capacity}
\begin{tabular}{lrrr}
\toprule
$d'$ & AUROC & AUARC & $\Delta H$ \\
\midrule
2 & $0.519\,\pm\,0.003$ & $0.565\,\pm\,0.019$ & $-0.018$ \\
3 & $0.520\,\pm\,0.007$ & $0.568\,\pm\,0.016$ & $0.016$ \\
5 & $0.534\,\pm\,0.003$ & $0.573\,\pm\,0.020$ & $0.025$ \\
10 & $0.552\,\pm\,0.005$ & $0.583\,\pm\,0.018$ & $0.052$ \\
15 & $0.559\,\pm\,0.004$ & $0.586\,\pm\,0.021$ & $0.068$ \\
\bottomrule
\end{tabular}
\end{table}

\begin{table}[t]
\centering
\caption{Ablation over the number of archetypes $K$ at fixed $n=20$ and fixed $d'$ (default $d'=15$). We enforce feasibility $K\le n$ and $K\le d'+1$.}
\label{tab:abl_K}
\begin{tabular}{lrrr}
\toprule
$K$ & AUROC & AUARC & $\Delta H$ \\
\midrule
5 & $0.555\,\pm\,0.007$ & $0.584\,\pm\,0.020$ & $0.046$ \\
8 & $0.556\,\pm\,0.003$ & $0.585\,\pm\,0.019$ & $0.063$ \\
12 & $0.556\,\pm\,0.003$ & $0.586\,\pm\,0.020$ & $0.054$ \\
16 & $0.556\,\pm\,0.003$ & $0.585\,\pm\,0.020$ & $0.064$ \\
\bottomrule
\end{tabular}
\end{table}

\subsection{Global Uncertainty}

We report similar ablations for global uncertainty in Table \ref{tab:full_ablations}, showing similarly that high $n$ and $d'$ are desirable, with performance fairly constant as number of archetypes are varied. We choose $K=16$ in our main results for improved performance in our local metrics.

\begin{table}[!h]
    \centering
    \caption{\textbf{Sensitivity Analysis on TriviaQA.} We report Mean $\pm$ Std over 3 runs. \textbf{(a)} As expected, increasing the sample size leads to improved performance. \textbf{(b)} The method requires $d' \ge 5$ to capture semantic structure; the poor performance at $d'=2$ validates the necessity of high-dimensional volume over simple 2D area. \textbf{(c)} Geometric Volume is robust to the number of archetypes $K$, remaining stable even with a minimal simplex ($K=4$).}
    \label{tab:full_ablations}
    \small
    
    \begin{subtable}[t]{0.32\textwidth}
        \centering
        \caption{Sample Size ($n$)}
        \resizebox{\textwidth}{!}{%
        \begin{tabular}{ccc}
            \toprule
            $n$ & AUROC & F1 \\
            \midrule
            5  & 0.705 $\pm$ 0.010 & 0.652 $\pm$ 0.017 \\
            10 & 0.731 $\pm$ 0.004 & 0.674 $\pm$ 0.012 \\
            15 & 0.744 $\pm$ 0.003 & 0.682 $\pm$ 0.014 \\
            20 & \textbf{0.752} $\pm$ 0.002 & \textbf{0.701} $\pm$ 0.004 \\
            \bottomrule
        \end{tabular}%
        }
    \end{subtable}
    \hfill
    \begin{subtable}[t]{0.32\textwidth}
        \centering
        \caption{PCA Dim ($d'$)}
        \resizebox{\textwidth}{!}{%
        \begin{tabular}{ccc}
            \toprule
            $d'$ & AUROC & F1 \\
            \midrule
            2  & 0.692 $\pm$ 0.014 & 0.666 $\pm$ 0.002 \\
            5  & 0.746 $\pm$ 0.005 & 0.701 $\pm$ 0.007 \\
            10 & 0.752 $\pm$ 0.003 & \textbf{0.705} $\pm$ 0.004 \\
            15 & \textbf{0.753} $\pm$ 0.002 & \textbf{0.705} $\pm$ 0.002 \\
            \bottomrule
        \end{tabular}%
        }
    \end{subtable}
    \hfill
    \begin{subtable}[t]{0.32\textwidth}
        \centering
        \caption{Archetypes ($K$)}
        \resizebox{\textwidth}{!}{%
        \begin{tabular}{ccc}
            \toprule
            $K$ & AUROC & F1 \\
            \midrule
            4  & \textbf{0.753} $\pm$ 0.001 & 0.706 $\pm$ 0.001 \\
            8  & \textbf{0.753} $\pm$ 0.002 & 0.704 $\pm$ 0.004 \\
            12 & \textbf{0.753} $\pm$ 0.002 & \textbf{0.707} $\pm$ 0.002 \\
            16 & \textbf{0.753} $\pm$ 0.002 & 0.705 $\pm$ 0.004 \\
            \bottomrule
        \end{tabular}%
        }
    \end{subtable}
\end{table}

\section{Selection of Geometric Suspicion Terms}
\label{appendix:term-selection}

Our Geometric Suspicion score is composed of three separate terms. To determine a suitable choice for these terms, we conducted an analysis of our datasets generated for the global uncertainty evaluation.

For each model $M$ and dataset $D$ investigated, we have a set of $N_{D,M}$ questions. For each question, we have a `default' answer, sampled with a temperature $T=0$, and a corresponding hallucination label as determined by a judge LLM (as described in Appendix Section \ref{appendix:expt-setup}). For each question, we have in addition $n=20$ answers sampled with temperature $T=1$, for which we generate hallucination labels as for the default answer. For each batch of answers, we then have the objects derived from our archetypal analysis: the answer embeddings $\mathbf{X}$, the convex decomposition coefficients $\mathbf{A}$, and the archetype embeddings $\mathbf{Z}$.

The primary intended application of our local uncertainty metric is hallucination reduction, so we first look at cases where there is a possibility to `flip' a model response to a non-hallucination by preferential selection of one of the batch responses. Figure \ref{fig:hrate-dist} plots a histogram of the sampled hallucination rate to determine the opportunity for doing so within our datasets. In the vast majority of cases, when the model hallucinates in the default answer, a high proportion of the sampled answers are also hallucinations. Approximately half the time, however, at least one answer is a non-hallucination, providing an opportunity for an informed local metric to be of use.

\begin{figure}
    \centering
    \includegraphics[width=0.5\linewidth]{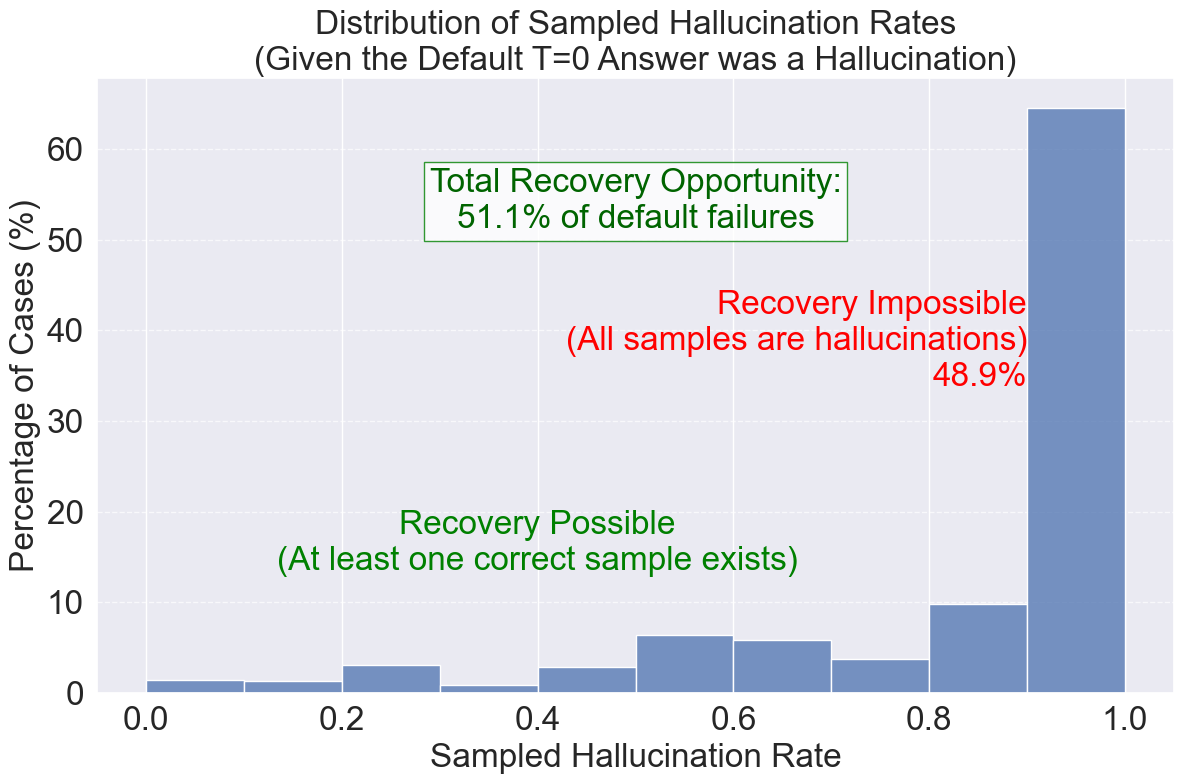}
    \caption{Distribution across all models and datasets of $T=1$ hallucination rate when the $T=0$ answer is hallucination.}
    \label{fig:hrate-dist}
\end{figure}

Next, we aimed to find some high level characteristics of hallucinations within our response set. We selected cases where the default response was judged to be a hallucination, but at least one hallucination and non-hallucination was present in the batch of $T=1$ answers. On the basic principle that for simple QA datasets there are many ways to be wrong, and fewer ways to be right, we formulated two hypotheses regarding the nature of hallucinations within the context of the embedding space spanned by each batch of answers. Firstly, hallucinations should sit in sparser regions of the embedding space compared to non-hallucinations. Secondly, hallucinations should sit closer to the edges of the batch embedding space, and therefore closer to the archetypes.

To test this, we compiled several terms of interest and compared them across our datasets. To measure the density of points within the embedding space, we used Local Density with $k=5$ (as described in Section \ref{sec:local_uncertainty}) and Voronoi cell volume. The Voronoi cell of an embedding $\mathbf{x}_i$ is the region of space closer to $\mathbf{x}_i$ than to any other point in the batch. A larger cell volume implies the response lies in a sparser region. As direct computation of these volumes in high-dimensional space is intractable, we approximate them via the dual Delaunay triangulation. First, for each batch of embeddings $\mathbf{X} = \{ \mathbf{x}_1, \dots, \mathbf{x}_N \}$, we reduce their dimensionality to a low-dimensional space (typically 3D) using Principal Component Analysis (PCA). We then compute the Delaunay triangulation of these reduced points. The volume of a response's Voronoi cell is approximated by summing the volumes of all Delaunay simplices that have the response's embedding as a vertex. A larger estimated volume signifies that a response lies in a sparser region of the local embedding space and therefore might be considered more suspicious.

To assess global position within the embedding space, we used the Distance from Consensus term described in Section \ref{sec:local_uncertainty} and Distance to Closest Archetype. The latter assesses a response's proximity to the boundaries of the semantic space spanned by the batch. For each response embedding $\mathbf{x}_i$, we compute its Euclidean distance to every archetype embedding $\mathbf{z}_k$, and take the minimum. A smaller distance means the response is closer to the edges of the batch semantic space and indicates greater suspicion. To align this with our other metrics where higher values denote higher suspicion, the final score is the negative of this minimum distance, defined as:
\begin{equation}
\label{eq:dist_to_archetype}
D_A(r_i) = - \min_{k \in \{1, \dots, K\}} \| \mathbf{x}_i - \mathbf{z_k} \|_2
\end{equation}

To assess the archetypal composition, we used Usage Rarity, as described in Section \ref{sec:local_uncertainty} and Geometric Entropy. The latter is the Shannon entropy \citep{shannon1948mathematical} of the archetypal reconstruction coefficients. The $i$-th row of $\mathbf{A}$, denoted $\boldsymbol{\alpha}_i = [A_{i1}, \dots, A_{iK}]$, is a probability distribution describing how the pre-processed response embedding $\mathbf{x}_i$ is reconstructed from the set of learned archetypes $\mathbf{Z}$. A response that aligns clearly with a single archetype will have a sparse, low-entropy $\boldsymbol{\alpha}_i$, while a response that lies between multiple archetypes will have a more uniform, high-entropy $\boldsymbol{\alpha}_i$. 

The Geometric Entropy for an individual response $r_i$ is defined as:
\begin{equation}
H_L(r_i) = H(\boldsymbol{\alpha}_i) = - \sum_{k=1}^{K} A_{ik} \log A_{ik}
\end{equation}

Additionally, we split our dataset into subsets based on the sampled hallucination rate. As behaviour such as clustering could be very different depending on whether 1 or 19 of the 20 responses were hallucinations, we split the dataset into \textit{low}, \textit{mid-low}, \textit{mid-high}, and \textit{high} hallucination subsets, with the corresponding rates described in Table \ref{tab:hallucination_subsets}.

\begin{table}[ht]
\centering
\caption{Definition of sampled hallucination rate subsets used for granular analysis. The rate, $r$, refers to the fraction of hallucinated responses within a 20-sample set for a given question.}
\label{tab:hallucination_subsets}
\begin{tabular}{cc}
\toprule
\textbf{Subset Name} & \textbf{Sampled Hallucination Rate ($r$)} \\
\midrule
Low           & $0 < r \leq 0.25$                \\
Mid-Low       & $0.25 < r \leq 0.50$               \\
Mid-High      & $0.50 < r \leq 0.75$               \\
High          & $0.75 < r < 1.0$                 \\
\bottomrule
\end{tabular}
\end{table}

For each term and subset of interest, we then compared the distribution of term values for hallucinations versus non-hallucinations in the sampled answers. In calculating each term, we constructed the sign such that our hypothesis for its predictive value aligned with our local uncertainty framework: a high term value should lead to high suspicion of being a hallucination. Finally, we performed a one-sided Mann-Whitney U Test test to determine whether the term scores for hallucinations were significantly greater than for non-hallucinations. Table \ref{tab:directional_significance} shows the results.

\begin{table*}[htbp]
\centering
\caption{One-sided p-values from the Mann-Whitney U test, testing if metric scores for hallucinations are stochastically greater than for non-hallucinations across the defined data subsets (see Table~\ref{tab:hallucination_subsets}). An extremely small p-value (e.g., $< 10^{-100}$) indicates a metric is both statistically significant and directionally correct. Values of $1.000$ indicate the metric is significant in the opposite direction (lower scores for hallucinations).}
\label{tab:directional_significance}
\begin{tabular}{lcccc}
\toprule
 & \multicolumn{4}{c}{\textbf{Sampled Hallucination Rate Subset}} \\
\cmidrule(lr){2-5}
\textbf{Uncertainty Metric} & \textbf{Low} & \textbf{Mid-Low} & \textbf{Mid-High} & \textbf{High} \\
\midrule
Distance Consensus & $< 10^{-100}$ & $< 10^{-100}$ & $1.000$ & $1.000$ \\
Local Density & $< 10^{-100}$ & $< 10^{-100}$ & $0.0152$ & $1.000$ \\
Usage Rarity & $< 10^{-100}$ & $< 10^{-100}$ & $0.0544$ & $1.000$ \\
Voronoi Volume & $< 10^{-100}$ & $< 10^{-100}$ & $0.0739$ & $1.000$ \\
Geometric Entropy & $< 10^{-100}$ & $< 10^{-100}$ & $0.2903$ & $1.000$ \\
Distance Nearest Archetype & $< 10^{-100}$ & $< 10^{-100}$ & $0.2533$ & $1.000$ \\
\bottomrule
\end{tabular}
\end{table*}

For all of our terms of interest, and hallucination rates less than 0.5, our intuitions were proved correct: non-hallucinations lie in denser regions of semantic space, are closer to a global consensus, and generally lie far away from archetypes and the edge of the embedding space. For high hallucination rates (over 0.75) this behaviour is completely reversed. We accept that in these cases it may be extremely hard to find the needle non-hallucination in a haystack of hallucinations. We suggest that in such situations it may be preferential to first use a global uncertainty measure to identify that the batch has high semantic dispersion, and defer from answering, as described in Section \ref{sec:discussion}.

For the mid-high subset, for all our terms of interest bar `Distance from Consensus' there is inconclusive evidence that hallucinations have greater values, although Local Density and Usage Rarity have p-values of 5\% or below. This suggests potential to diagnose hallucinations within the batch even at these high hallucination rates, using a combination of multiple terms.

We choose Local Density and Usage Rarity as components of Geometric Suspicion due to their relatively high performance in the Mann-Whitney U tests, and their alignment with our original intuitions. We further include Distance from Consensus based on empirical observations of multiple clusters in some response batches, corresponding to both hallucinations and non-hallucinations. In these cases we find the Distance from Consensus term helps to select the non-hallucinating cluster.

\section{Local Uncertainty Efficacy}
\label{appendix:app-context}

In Section \ref{sec:discussion}, we note that the efficacy of local uncertainty measures inherently bounded by the quality of the sampled response set. To quantify this, we analyzed the relationship between the reduction in hallucination rate ($\Delta H$) and the proportion of correct answers available in the sampled batch for questions where the default response was a hallucination.

Figure \ref{fig:suspicion_utility} illustrates this relationship across all model and dataset combinations. The x-axis represents the \textit{Potential to Correct}: defined as the median proportion of non-hallucinated answers found in the sampled batch given that the default greedy decode was incorrect. The y-axis shows the actual performance gain ($\Delta H$).

We observe a strong positive correlation. Notably, the case of Qwen3-8B on TriviaQA (the lowest performance in $\Delta H$ terms) demonstrates the `confidently wrong' failure mode: while the model's default answer is incorrect, the sampled batch also contains very few correct alternatives. In such regimes, the semantic space is collapsed around the hallucination, and no re-ranking method can be effective. Conversely, as the correctness proportion of the sampled batch improves (moving right), Geometric Suspicion successfully identifies and retrieves the correct answers.

\begin{figure}[htbp]
    \centering
    \includegraphics[width=0.7\textwidth]{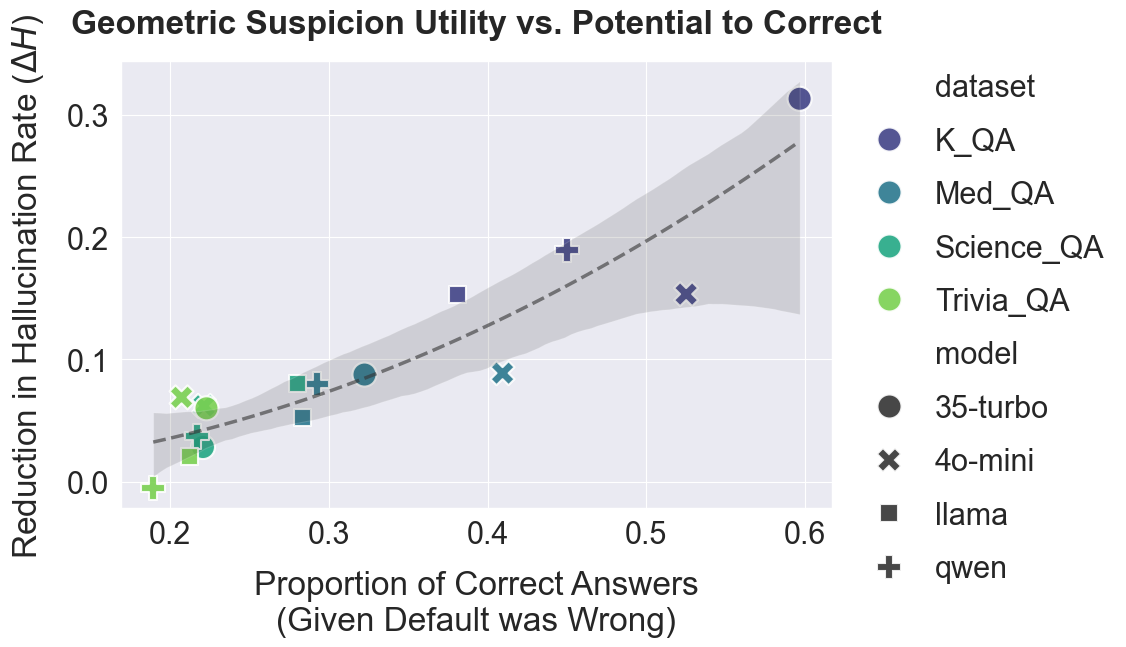}
    \caption{\textbf{Geometric Suspicion Utility vs. Corrective Potential.} The reduction in hallucination rate ($\Delta H$) is plotted against the median proportion of correct answers found in the sampled batch for instances where the default answer was incorrect. The dashed curve represents a second-order polynomial regression fit highlighting the general scaling trend, while the shaded region indicates the 95\% confidence interval. Performance naturally drops in `confidently wrong' regimes where the sampled batch lacks correct alternatives, but improves as the corrective potential increases.}
    \label{fig:suspicion_utility}
\end{figure}



\section{Computational and Memory Complexity}
The dominant cost of our framework arises from the archetypal analysis step, which alternates between coefficient and archetype updates. Each update involves matrix multiplications over $n$ sampled responses of reduced dimension $d'$, resulting in a total time complexity of $\mathcal{O}(T\,n d' K)$, where $T$ is the number of alternating optimization steps and $K$ the number of archetypes. In our setting, $d' < n$ due to dimensionality reduction, and both $K$ and $T$ are small constants, making the method effectively linear in the number of responses and embedding dimensions. 

The memory footprint is dominated by the response embeddings $\mathbf{X} \in \mathbb{R}^{n \times d'}$ and coefficient matrices $\mathbf{A} \in \mathbb{R}^{K \times n}$ and $\mathbf{B} \in \mathbb{R}^{n \times K}$, yielding an overall space complexity of $\mathcal{O}(n d' + n K + K d')$. Since $K \leq \min(n, d'+1)$ in all experiments, the approach remains both computationally and memory efficient for large-scale response batches.

\end{document}